\documentclass{applemlr}

\usepackage{amsmath}
\usepackage{enumerate}
\usepackage{algorithm}
\usepackage{algpseudocode}
\usepackage{amsfonts}
\usepackage{amsthm}
\usepackage{cleveref}
\usepackage{diagbox}
\usepackage{colortbl}
\usepackage{amssymb}
\usepackage{xspace}
\usepackage{wrapfig}
\usepackage{adjustbox}
\usepackage{tabularx}
\usepackage{booktabs}
\usepackage{mathtools}
\usepackage{tikz}
\usepackage{enumitem}
\usepackage{silence}
\usepackage{dsfont}
\usepackage[table]{xcolor}
\usepackage[dvipsnames]{xcolor}
\usepackage{multirow}
\usepackage{makecell}
\usepackage{xfakebold}

\usepackage{amsmath,amsfonts,bm}

\def\eqref#1{equation~\ref{#1}}

\def\1{\bm{1}}

\DeclareMathAlphabet{\mathsfit}{\encodingdefault}{\sfdefault}{m}{sl}
\SetMathAlphabet{\mathsfit}{bold}{\encodingdefault}{\sfdefault}{bx}{n}

\definecolor{textgray}{HTML}{6E6E73}
\usetikzlibrary{positioning, calc}
\usetikzlibrary{decorations.pathmorphing}

\makeatletter
\patchcmd{\wrong@fontshape}{\@gobbletwo}{}{}{}
\makeatother
\numberwithin{equation}{section}
\makeatletter
\AtBeginDocument{
  \urlstyle{sf}
  
}
\makeatother

\definecolor{light}{RGB}{125, 125, 125}
\crefname{tcb@cnt@pbox}{code}{code}
\Crefname{tcb@cnt@pbox}{Code}{Code}
\crefname{assumption}{assumption}{assumption}
\Crefname{assumption}{Assumption}{Assumptions}

\newtcolorbox[auto counter]{pbox}[2][]{
  colback=white,
  title=Code~\thetcbcounter: #2,
  #1,fonttitle=\sffamily,
  fontupper=\sffamily,
  arc=2pt,
  colframe=bgcolor,
  coltitle=fgcolor,
  colbacktitle=bgcolor,
  toptitle=0.25cm,
  bottomtitle=0.125cm
}

\makeatletter
\newcommand\applefootnote[1]{%
  \begingroup
  \renewcommand\thefootnote{}%
  \renewcommand\@makefntext[1]{\noindent##1}%
  \footnote{#1}%
  \addtocounter{footnote}{-1}%
  \endgroup
}
\makeatother

\definecolor{cverbbg}{gray}{0.90}

\usepackage{graphicx}
\usepackage{subcaption}
\usepackage{nicefrac}
\usepackage{microtype}
\usepackage{float}
\usepackage{pgfplots}
\pgfplotsset{compat=1.18}
\usetikzlibrary{decorations.pathreplacing}

\definecolor{cellgreen}{RGB}{180,230,175}
\definecolor{cellblue}{RGB}{175,208,240}
\definecolor{cellgray}{RGB}{232,232,232}
\definecolor{cellyellow}{RGB}{255,248,175}
\definecolor{cellred}{RGB}{255,195,195}
\definecolor{cellpurple}{RGB}{220,200,240}

\definecolor{barVanilla}{RGB}{155,155,155}
\definecolor{barTDA}{RGB}{120,170,225}
\definecolor{barSheaf}{RGB}{38,100,190}

\definecolor{cgn4wh}{rgb}{0.9600,0.9800,0.9600}
\definecolor{cgr9wh}{rgb}{0.9550,0.9550,0.9550}
\definecolor{cbl6wh}{rgb}{0.9400,0.9400,1.0000}

\newcolumntype{L}[1]{>{\raggedright\arraybackslash}p{#1}}
\newcolumntype{K}[1]{>{\centering\arraybackslash}p{#1}}

\newtheorem{theorem}{Theorem}[section]

\providecommand{\best}[1]{\textbf{#1}}

\providecommand{\dneg}[1]{\textbf{\ensuremath{-}#1}}

\title{TopoPrimer: The Missing Topological Context in Forecasting Models}

\author{Zara Zetlin}
\author{Kayhan Moharreri}
\author{Maria Safi}

\affiliation{Apple}

\abstract{
We introduce \textbf{TopoPrimer}, a framework that makes the global topological structure
  of the series population an explicit input to any forecasting
  model. TopoPrimer improves accuracy across diverse domains, stabilizes forecasts under seasonal demand spikes, and closes the cold-start gap. Precomputed once per domain via persistent homology and spectral sheaf coordinates, TopoPrimer deploys per token for fully-trained models and as a lightweight adapter for pre-trained backbones. Of these two components, sheaf coordinates are the primary accuracy driver. Across four public benchmarks on Chronos and TimesFM, TopoPrimer consistently improves forecasting accuracy, with gains of up to 7.3\% MSE on ECL. The topology advantage persists with near-identical magnitude across
  zero-shot and fine-tuned backbones, suggesting topology and
  per-series training capture complementary signals.
  The gains are most pronounced in difficult regimes.
  Under peak seasonal demand, classical and zero-shot models degrade by up to 50\%, while TopoPrimer  stays within 10\%. At cold start with no item
  history, TopoPrimer reduces MAE by 27\% over a topology-free
  baseline.
}

\metadata[Correspondence]{\sffamily Zara Zetlin: \url{zzetlinfishbein@apple.com}}
\date{\sffamily\today}

\begin{document}

\maketitle

\section{Introduction}
{\looseness=-1 Time series foundation models (TSFMs) such as 
Chronos~\citep{ansari2025chronos2} and TimesFM~\citep{das2024timesfm} 
have fundamentally shifted the forecasting paradigm. Pre-trained on 
billions of series from diverse corpora, they generalize across 
domains without per-dataset fine-tuning. Each series is encoded from 
its own token history, and cross-series reasoning is learned only 
implicitly through attention. This architecture is powerful, yet it 
leaves one source of information unexploited: the global topological 
structure of the series~population.}

In any real-world forecasting domain, whether energy grids, retail supply chains, or road traffic networks, the full collection of series forms a manifold with coherent, informative geometry. Within this manifold, series can be grouped behaviorally, form loops of
  co-movement, and be naturally divided into distinct regions. Crucially, this structure cannot be observed from any individual series alone. Yet, across the series population, it constitutes a systematic, recoverable signal which could be used at every forecast step.

To capture this signal, we introduce
  TopoPrimer, a framework that encodes the topological shape and
  relational population structure as a frozen
  precomputed input to any forecasting backbone. To create TopoPrimer's topological context vector, we apply two tools grounded in algebraic topology. The first is topological data analysis (TDA), specifically persistent
  homology,
  to capture topological shape across scales. While prior forecasting work applies
  persistent homology to sliding-window embeddings of individual
series \citep{zeng2021topological,lin2025crosstoponet,lin2025tis},
  we instead apply it to the cross-series correlation manifold
(Figure~\ref{fig:population_tda}). This produces a 125-dimensional persistence
   landscape fingerprint encoding global clustering ($H_0$), cyclic
  co-movement ($H_1$), and boundary structure ($H_2$), computed once
  per domain and shared across all series.

The second tool we use is cellular sheaf theory~\citep{curry2014sheaves,hansen2021sheaf}, which  
  describes how each series is situated within 
  the full domain. Prior sheaf work computes   
  this via learned graph convolutions,  

       \newpage
  replacing or augmenting the backbone
entirely~\citep{li2018dcrnn,wu2019graphwavenet,bodnar2022neural,mostafa2026stsheaf}.   We instead derive the sheaf coordinate without learned graph convolutions, keeping the
  topology signal backbone-agnostic. Rather than training a full sheaf network, 
  we initialize this embedding spectrally via truncated SVD of the entity-time matrix and find the closed-form result superior to the trained
   alternative.
This produces a 256-dimensional spectral representation per series encoding relational 
  position and cross-entity similarity, computed once per domain and unique to each series.

Each of these topology components is projected to a common hidden dimension. In the 
  fully-trained setting, these projections are summed into a single context vector  
  that is broadcast-added to every temporal input token. In the pre-trained setting,
   a lightweight adapter merges the topology projections with the frozen base
  forecast to apply topology-informed residual corrections. The adapter is less than 0.1\% of either Chronos or TimesFM, and trains
  entirely on cached base forecasts with no gradient through the backbone.

TopoPrimer consistently improves accuracy across diverse domains, limits degradation under seasonal demand spikes, and closes the cold-start gap. Across four public datasets, MAE falls by 7.9\% on Monash Weather with the fully-trained
   Transformer. In the pre-trained setting, MSE falls by 7.3\% with Chronos and 6.8\% with TimesFM
  on ECL. Notably, the topology advantage persists on a fine-tuned backbone, suggesting population-level topological structure captures a complementary signal. These gains are most pronounced in difficult regimes. Under peak seasonal demand,
  TopoPrimer degrades by under 10\%, while classical models
  and zero-shot TSFMs such as Chronos degrade by up to 50\%. At cold start, where no item history exists at launch, TopoPrimer reduces MAE by
27\% over a vanilla topology-free Transformer.  These results demonstrate cross-series topology as a useful forecasting signal, 
  injectable into any model at minimal~cost.

\paragraph{Contributions.} We make the following contributions:
\begin{itemize}
\item \textbf{Population-level TDA as a forecasting feature.} We apply persistent homology to the cross-series correlation manifold rather than to individual series, producing a shared persistence landscape vector that encodes global clustering, cyclic co-movement, and boundary structure across the full domain. To our knowledge, this is the first application of TDA to the population manifold for forecasting.

{\looseness=-1\item \textbf{Spectral sheaf coordinates as a per-series  relational prior.} We derive the spectral
  form of this coordinate directly from the leading left singular vectors of the entity-time
  matrix, requiring no training or graph construction. Grounded in cellular sheaf theory, these coordinates capture each series' position and relational structure within the full population, encoding where a series sits relative to dominant patterns across the domain.}
  
{\looseness=-1\item \textbf{A unified framework across training paradigms.} The same topology features improve both fully-trained transformers and frozen pre-trained TSFMs under a single architecture, demonstrating how population topology is a broadly useful signal  across backbone families. }
\end{itemize}

\begin{figure}[t]
  \centering
  \includegraphics[width=0.88\linewidth]{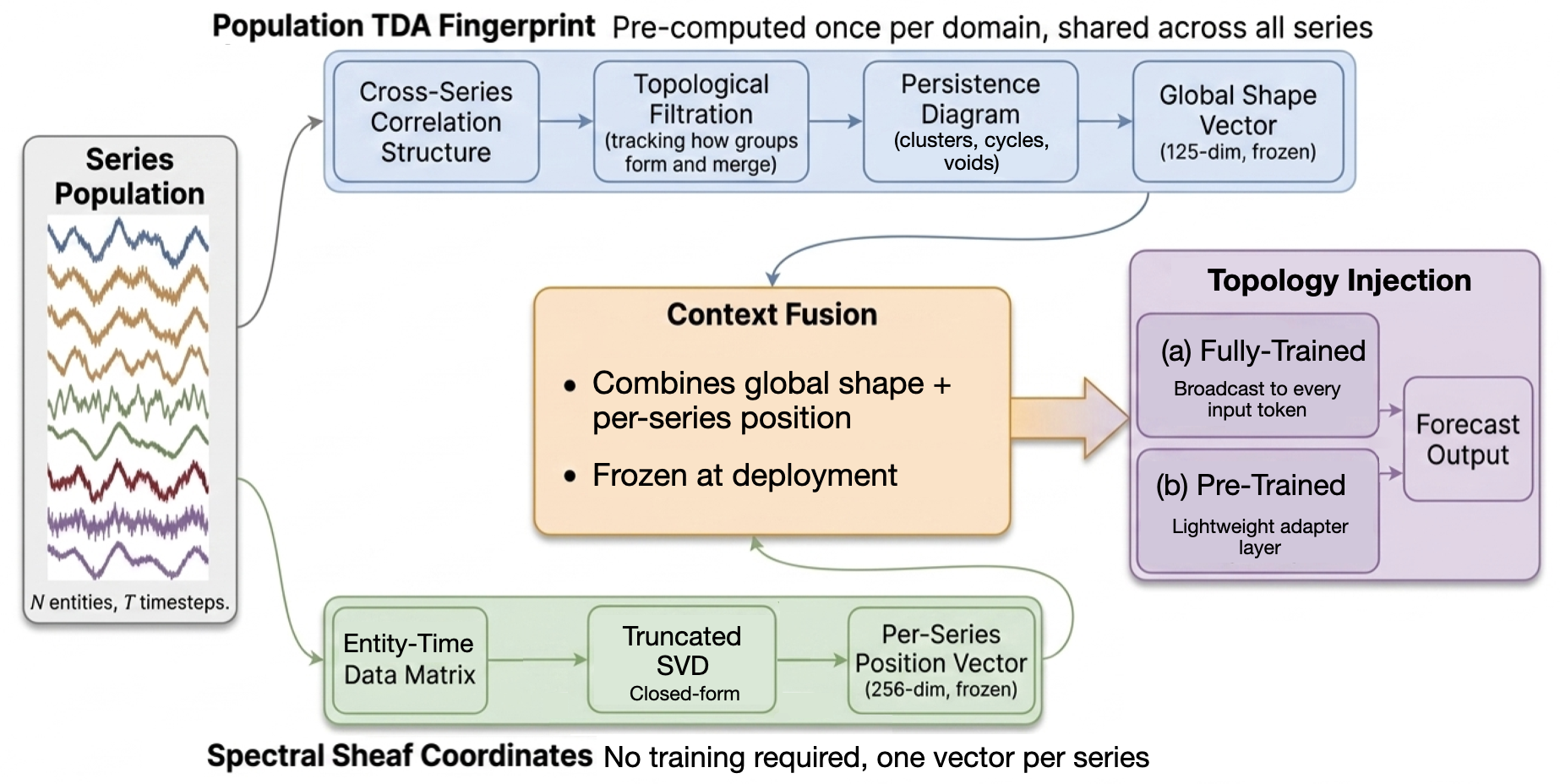}
  \caption{%
    \textbf{TopoPrimer architecture overview.}
    Two frozen signals are extracted offline from the series population: a                  
    125-dimensional global TDA fingerprint via topological filtration
    (top), and a 256-dimensional per-series spectral sheaf coordinate via                   
    truncated SVD (bottom). After fusion, the combined context is injected into
    the backbone either by broadcast-addition to every input token
    \textit{(a) fully-trained} or via a lightweight adapter with the backbone
    frozen \textit{(b) pre-trained}. Backbone weights are
    never modified; both components require no
    gradient.
  }
  \label{fig:architecture}
\end{figure}

\section{Related Work}
\label{sec:related}
\paragraph{Topological deep learning and TDA for time series.} Topological deep 
  learning (TDL)~\citep{papillon2024tdl} shapes neural network architecture around 
  the topology of the underlying data space. Within forecasting, prior work applies 
  persistent homology~\citep{carlsson2009topology,edelsbrunner2010computational} to 
  sliding-window embeddings of individual   series~\citep{zeng2021topological,lin2025crosstoponet,lin2025tis,kim2025topocl}.
  These methods capture within-series temporal dynamics, such as periodicity and
  local shape, but each window produces its own descriptor. The geometry of the
  broader population is never modeled. Instead, TopoPrimer applies persistent
  homology directly to the cross-series correlation manifold, producing one shared
  fingerprint for the entire domain. This reframing, from per-series temporal
  topology to population-level relational topology, is the core methodological
  departure from prior TDA forecasting work.

  \newpage 
\paragraph{Graph and relational forecasting.}
Graph-based forecasters such as DCRNN~\citep{li2018dcrnn}, Graph
WaveNet \citep{wu2019graphwavenet}, and MTGNN~\citep{wu2020mtgnn} learn directed or adaptive adjacency over fixed entity graphs,
  replacing or augmenting the backbone for each domain.
Transformer-based models~\citep{zhou2021informer,lim2021tft,nie2023patchtst}
sidestep relational structure entirely, encoding each series independently.
Most similar to ours, global-factor models~\citep{wang2019deepfactors} learn a low-rank factorization jointly with the forecast objective, producing latent per-series coordinates, but as learned embeddings rather than a closed-form frozen prior.
 Unlike all of these, TopoPrimer does not
replace or modify the backbone; it injects population topology as a
precomputed context that any existing model can consume without modification.

\paragraph{Cellular sheaf methods.}
Cellular sheaf theory~\citep{curry2014sheaves,hansen2021sheaf} extends graph
convolution by assigning restriction maps to node-edge incidences, enabling
relational structure that shared-weight message-passing cannot represent.
Bodnar et al.~\citep{bodnar2022neural} learn distinct per-incidence
restriction maps on heterophilic graphs; ST-Sheaf
GNN~\citep{mostafa2026stsheaf} applies diagonal maps for spatio-temporal
forecasting, using the sheaf network itself as the full model. Both remain
locally focused: each node's representation is shaped by its immediate
neighbors with no view of its position within the broader population.
TopoPrimer instead derives each series' coordinate from the leading left singular vectors of 
  the entity-time matrix in closed form, requiring no training. Deriving spectral sheaf 
  coordinates as a frozen, backbone-agnostic prior for time series forecasting is an approach    
  that prior sheaf work has not, to our knowledge, explored.

\paragraph{Time series foundation models.}
TSFMs such as
  Chronos~\citep{ansari2025chronos2} and TimesFM~\citep{das2024timesfm}  are designed for zero-shot transfer across domains. When adaptation is needed, the model is updated via fine-tuning on individual series histories.
  Neither regime introduces explicit population-topology signals. TopoPrimer does, by injecting  precomputed population-level TDA features and per-series spectral sheaf coordinates as a frozen, backbone-agnostic context vector.

\newpage 
\section{Method}
\label{sec:method}
TopoPrimer treats topology as a precomputed prior, not a learned component. Two signals are extracted offline
   once per domain, a population TDA fingerprint and per-series spectral sheaf coordinates. These are
  fused into a context vector, and injected into any forecasting backbone without weight
  modification (Figure~\ref{fig:architecture}). We describe each signal in turn, then detail
  injection for the fully-trained and pre-trained settings. Mathematical definitions appear in
  Appendix~\ref{app:prelim}.

\subsection{Population TDA Fingerprint}
\label{sec:tda}
\paragraph{Correlation manifold.}
Given $N$ series, we form an $N \times T$ matrix $\mathbf{X}$ of normalized
historical observations and compute the correlation-distance matrix
$D_{ij} = 1 - |\rho_{ij}|$, where $\rho_{ij}$ is the Pearson correlation
between series $i$ and $j$. For large populations we sparsify via $k=50$
nearest neighbors, since it reduces memory from $O(N^2)$ to $O(Nk)$, sufficient for the population
  sizes in our domains.
We then apply
persistent homology to this manifold. The
resulting persistence landscape is Lipschitz-continuous with respect to the
data distribution (Appendix~\ref{app:lipschitz}), so the fingerprint degrades
gracefully under noise.

\paragraph{Vietoris-Rips filtration.}
We run a Vietoris-Rips filtration~\citep{ctralie2018ripser} up to dimension 2, covering the three fundamental topological primitives. Higher dimensions are 
  computationally expensive and empirically absent in correlation manifolds of      
  typical scale.
We extract $H_0$ (clustering), $H_1$ (cyclic co-movement), and $H_2$ (structural boundary)
features as birth-death pairs across the filtration. Long-lived features represent
robust population structure and short-lived ones are noise. Formal definitions
appear in Appendix~\ref{app:prelim}.

\paragraph{Persistence landscape vectorization.}
We convert each persistence diagram to a fixed-size vector via the persistence
  landscape~\citep{bubenik2015statistical} (definition in
  Appendix~\ref{app:prelim}). We sample landscape layers $\lambda_1$ and $\lambda_2$
   at 25 points each for $H_0$ and $H_1$, and $\lambda_1$ only at 25 points for
  $H_2$, where voids are sparse and $\lambda_2$ contributes noise rather than
  signal. Including $\lambda_2$ for $H_0$ and $H_1$ captures secondary structure,
  such as a two-cluster market split, that the top landscape alone misses. This
  yields a \textbf{125-dimensional TDA fingerprint} ($50 + 50 + 25$), computed once
  per domain and broadcast identically to all series.

\subsection{Sheaf Encoder}
\label{sec:sheaf}
\paragraph{Spectral sheaf coordinates.}
  While the TDA fingerprint captures the global shape of the series population, 
  the sheaf component provides a complementary per-series signal, encoding where each
   series sits relative to others in the domain. A cellular 
sheaf~\citep{curry2014sheaves,hansen2021sheaf} assigns a spectral coordinate to   
  each series based on its relational position within the population; the formal 
  derivation appears in Appendix~\ref{app:prelim}.

Concretely, this coordinate is row~$i$ of $U$, the left factor of a truncated
  singular value decomposition (SVD) $\mathbf{X} \approx U \Sigma V^\top$, where
  $\mathbf{X} \in \mathbb{R}^{N \times T}$ is the entity-time matrix over the full
  dataset (Figure~\ref{fig:sheaf_spectral}). When series span unrelated categories, as in M5, where 30,490 item-store
   series cross category boundaries, we partition into semantically coherent groups
   and apply SVD within each. The resulting coordinate retains all available singular vectors and is
  zero-padded to 256 dimensions, giving the \textbf{spectral relational feature} of
  series~$i$.

We evaluate a learned neural sheaf encoder as an alternative in
  Appendix~\ref{app:spectral_vs_neural}. Spectral coordinates uniformly outperform
  the neural sheaf encoder at a fraction of the cost, and are adopted as default. The TDA fingerprint is
  \emph{global} (one shared vector per domain), whereas spectral relational features are
  \emph{per-series} (each series' coordinate in $U$ locates it within the shared
  demand manifold).

\begin{figure}[h]
  \centering
  \includegraphics[width=0.65\textwidth]{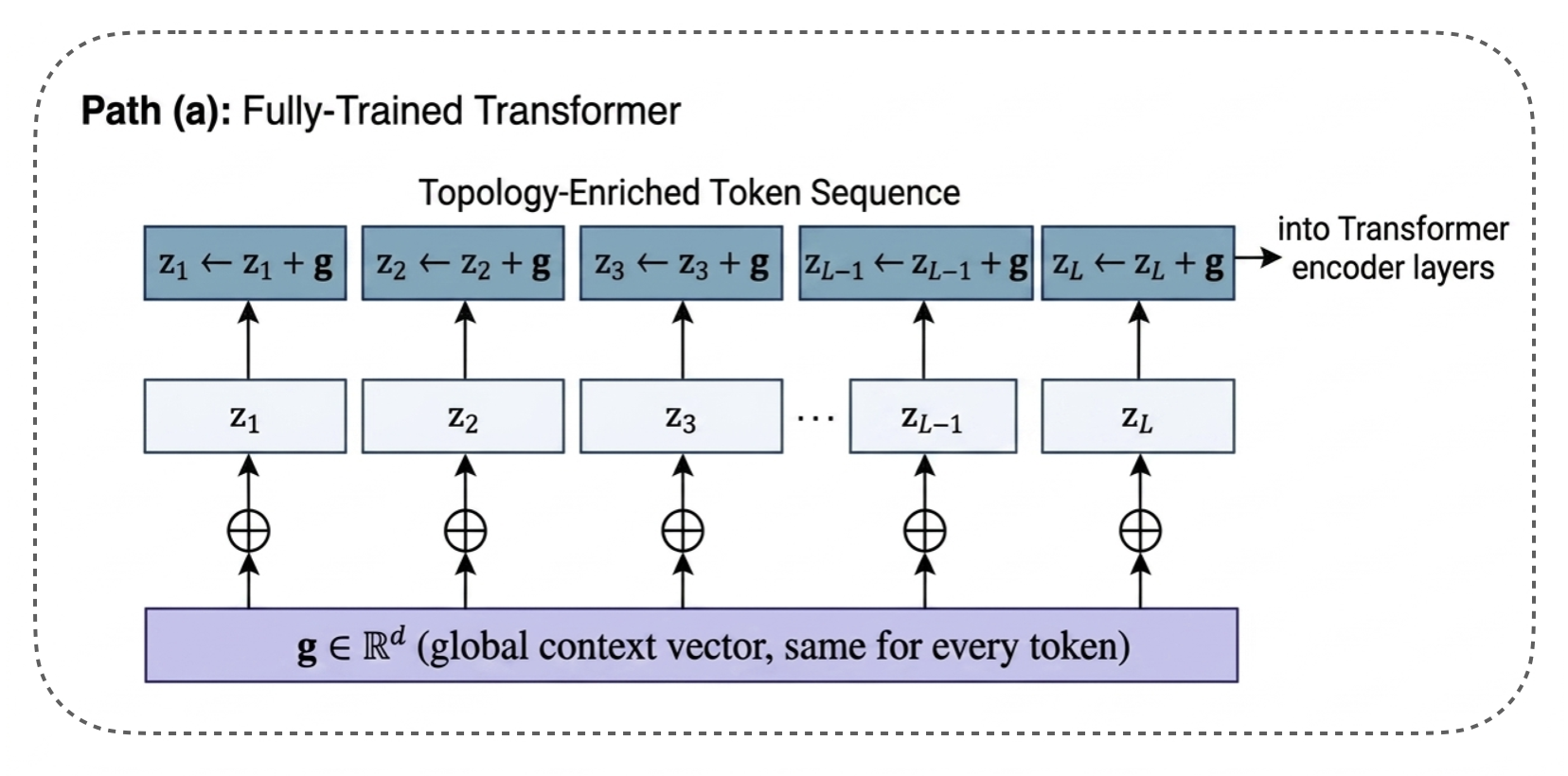}
  \caption{%
    \textbf{Global context broadcast injection (Path~(a)).}
    $\mathbf{g} \in \mathbb{R}^d$ is broadcast-added to every temporal token
    before the encoder. 
    No gradient flows through $\mathbf{g}$.%
  }
  \label{fig:broadcast_injection}
\end{figure}

\begin{figure}[!ht]
  \centering
  \includegraphics[width=0.99\linewidth]{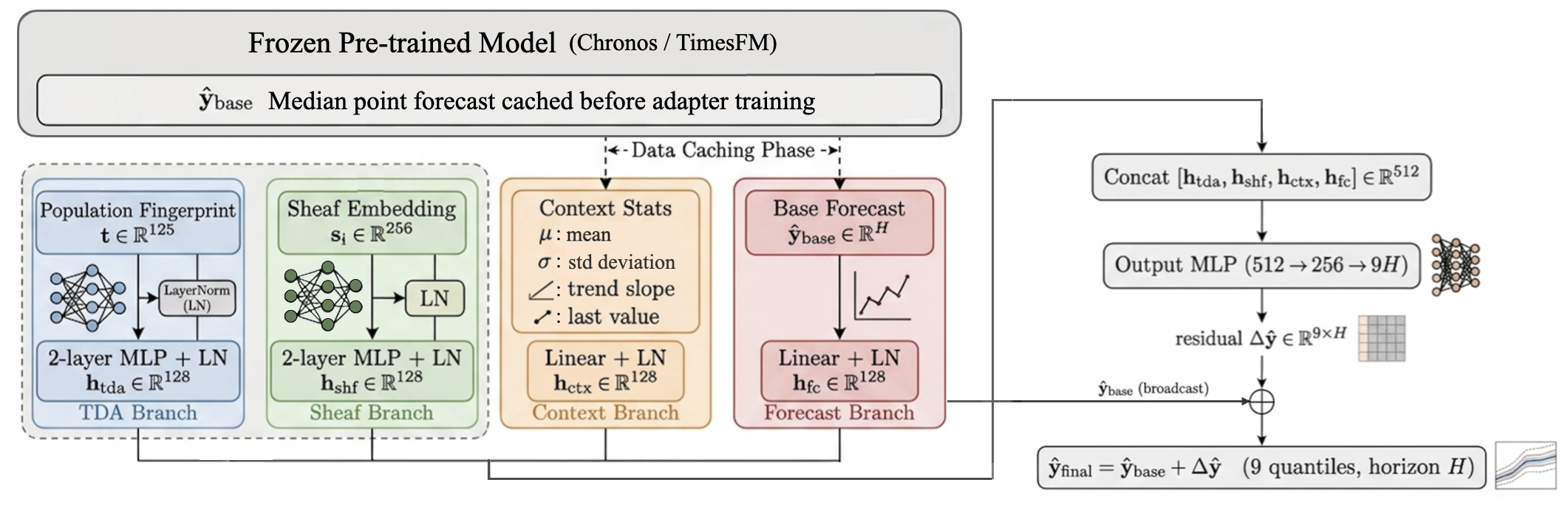}
  \caption{%
    \textbf{Topology adapter for frozen pre-trained backbones.}
    Four independent branches each project to a common hidden dimension
    $H{=}128$: the TDA fingerprint (blue), the per-series spectral sheaf
    coordinate (green), four z-scored context statistics (orange), and the frozen
    backbone's cached median forecast (red).
    Separate projections ensure no branch dominates
  by sheer input dimensionality.
    The concatenated 512-dimensional representation passes through an output
    MLP that produces a residual correction $\Delta\hat{\mathbf{y}} \in
    \mathbb{R}^{9 \times H}$,
   added to the base forecast across all
   9 quantiles to yield
    $\hat{\mathbf{y}}_{\text{final}}$.  %
  }
  \label{fig:adapter}
\end{figure}


\subsection{Integration into Fully-Trained Transformers}
\label{sec:integration_trained}

Our fully-trained backbone is a standard Transformer encoder ($d_{\text{model}} =
256$, 6 layers, 8 heads, pre-norm), where each time step is embedded from $\mathbb{R}$ to $\mathbb{R}^{256}$ via a learned linear projection. Sinusoidal positional encodings are then added 
  to each token before the encoder. Both topology-derived features are injected as a global context
vector broadcast-added to every temporal token before the encoder
(Figure~\ref{fig:broadcast_injection}).

\paragraph{Global context injection.}
A context projection $\mathbf{W}_{\text{ctx}}$ maps the
   125-dim TDA fingerprint to $\mathbb{R}^{d_{\text{model}}}$. On datasets with an   
  explicit entity hierarchy (e.g., M5 store$\times$category), learned entity        
  embeddings are concatenated with the fingerprint before projection. The 256-dim   
  spectral coordinate is then mapped into the same space through a        
  dedicated projection $\mathbf{W}_{\text{sheaf}}$ and added to the result. 
  The two projections are kept separate intentionally. When $\mathbf{W}_{\text{sheaf}}$ is instead shared
   with $\mathbf{W}_{\text{ctx}}$ in a single joint linear layer, gradient descent
  tends to assign near-zero weights to the sheaf columns early in training,
  suppressing the sheaf signal before it can influence the model. A dedicated
   projection path prevents this. The resulting vector $\mathbf{g} \in \mathbb{R}^{d_{\text{model}}}$ is added to every
  temporal token $\mathbf{z}_t \in \mathbb{R}^{d_{\text{model}}}$ across all $L$ input steps:
 $$\mathbf{z}_t \leftarrow \mathbf{z}_t + \mathbf{g}, \quad t = 1,\ldots,L.$$

  Training minimizes a Huber quantile loss ($\delta = 1.0$, a standard choice robust to outliers) over 9 output
  quantiles. Calibration results appear in
  Appendix~\ref{app:calibration}. Full architecture and hyperparameter details appear in Appendix~\ref{app:arch}.

\subsection{Integration into Pre-Trained Foundation Models}
\label{sec:integration_tsfm}

For pre-trained backbones, we freeze all weights and train a lightweight
  \emph{topology adapter} that corrects the frozen base forecast. Since no gradient flows through the backbone, the adapter applies to any model that
   produces a point forecast.

\paragraph{Adapter architecture.}
The adapter processes four inputs through dedicated branches (Figure~\ref{fig:adapter}). Each branch
  projects to a common dimension of $128$, preventing any single input
  from dominating by sheer size. The four branches~are:
\begin{itemize}
  \item \textbf{TDA branch:} 125-dim population fingerprint, two-layer MLP
  with LayerNorm.
  \item \textbf{Sheaf branch:} 256-dim spectral coordinate, two-layer MLP with LayerNorm.
  \item \textbf{Context branch:} four z-scored series statistics (mean, standard
    deviation, linear trend slope, and last observed value), linear layer with
  LayerNorm.
  \item \textbf{Forecast branch:} cached median forecast $\hat{\mathbf{y}}_{\text{base}} \in \mathbb{R}^{H}$ from the
   frozen backbone, projected via linear layer with LayerNorm.
\end{itemize}
Z-scoring the context statistics removes cross-series scale variation, so the adapter
  learns meaningful patterns rather than unit conversions.

The adapter predicts a residual correction rather than a forecast from scratch, ensuring the model learns only the topological contribution. The four branch representations are concatenated and   
  passed through an output MLP $(512 \to 256 \to 9H)$:
\[
  \hat{\mathbf{y}}_{\text{final}}
  \;=\; \hat{\mathbf{y}}_{\text{base}}
  \;+\;
  \mathrm{OutputMLP}\!\bigl(
    [\mathbf{h}_{\text{tda}},\,\mathbf{h}_{\text{sheaf}},\,
     \mathbf{h}_{\text{ctx}},\,\mathbf{h}_{\text{fc}}]
  \bigr),
\]
 $\hat{\mathbf{y}}_{\text{base}}$ is broadcast across all quantiles as a warm start, with no gradient flowing through the backbone.

\paragraph{Ablations.}
Across the fully-trained and pre-trained settings, three architecture-matched configurations
  are evaluated:
\textbf{Vanilla} (no topology), \textbf{$+$TDA} 
(the population fingerprint), and \textbf{$+$TDA $+$ Sheaf} (population fingerprint and per-series spectral coordinates). \textbf{$+$TDA $+$ Sheaf} is the full TopoPrimer model.
Across all three variants, the output MLP is identical. Between variants, parameter differences reflect only the topology encoding branches, isolating topology's 
  contribution from additional prediction capacity.

\section{Results}
\label{sec:results}

\subsection{Topology Screening}
\label{sec:screening}

From the precomputed TDA features, we derive a simple pre-training screen:
$H_1/N$, the number of persistent loops in the domain divided by the number
of series. More loops per series means the correlation manifold has richer
cyclic co-movement structure, and therefore predicts a larger TDA contribution. The sheaf
coordinate is independent: it provides consistent per-series gains on every
domain regardless of loop density, and the screening criterion governs only
how much TDA will \emph{amplify} those gains.

Table~\ref{tab:screening} shows how $H_1/N$ predicts the magnitude of error reduction.
\textbf{METR-LA} and \textbf{ECL} share similar $H_1/N$ (0.22 and 0.26) and
similar modest gains ($-0.005$ and $-0.012$ MAE).
\textbf{Monash Weather} ($H_1/N{=}0.61$) stands out: its denser genuine loop
structure produces gains $5{\text{--}}14{\times}$ larger in MAE and
$20{\text{--}}48{\times}$ larger in MSE than ECL.
\textbf{M5 Household} has $H_1/N{=}4.12$, but the count is artifact-inflated:
shared weekly and annual seasonality creates calendar harmonics, not cross-series
relational loops, so TDA contributes near-zero and the observed MAE gain comes
from the sheaf alone.

\begin{table}[h]
\centering
\caption{%
  \textbf{\boldmath$H_1/N$ density characterizes domain manifold structure and
  predicts TDA amplification.}
  $H_1$ generators are persistent loops in the correlation manifold; $H_1/N$ normalizes
  by series count. TDA$+$Sheaf $\Delta$MAE (Chronos) scales with $H_1/N$: modest on
  sparse and artifact-inflated domains, strong on genuine-rich domains. Sheaf gains
  are present on all domains regardless of $H_1/N$.
  ($H_0$, $H_1$, $H_2$ landscape curves in Appendix~\ref{app:homology}).%
}
\label{tab:screening}
\begin{tabular}{llrrrr}
\toprule
\textbf{Dataset} & \textbf{Domain} & \textbf{N} & \textbf{\boldmath$H_1$}
  & \textbf{\boldmath$H_1/N$} & \textbf{TDA$+$Sheaf $\Delta$MAE} \\
\midrule
METR-LA        & Traffic     &      207 & 46         & 0.22 & $-0.005$ \\
Monash Weather & Weather     &  3{,}010 & 1{,}847    & 0.61 & $-0.074$ \\
ECL            & Electricity &      321 & 83         & 0.26 & $-0.012$ \\
M5 Household   & Retail      &  9{,}890 & 40{,}780\rlap{$^\dagger$} & 4.12\rlap{$^\dagger$} & $-0.015$ \\
\bottomrule
\multicolumn{6}{l}{\footnotesize $^\dagger$M5 $H_1$ inflated by shared weekly/annual calendar periodicity; genuine cross-series loop count unknown.}\\
\end{tabular}
\end{table}

UMAP projections (Figure~\ref{fig:umap_manifolds}) confirm why loop density varies across domains:
ECL and Weather display arc and loop structure; METR-LA shows a filament;
M5 shows a structureless diffuse cloud consistent with calendar-driven correlations
and no exploitable manifold geometry.

\subsection{Main Results}
\label{sec:main_results}

Table~\ref{tab:toposheaf_public} reports MAE and a domain-standard secondary
metric across all four benchmarks and three backbone families. Secondary metrics
follow the literature convention and were fixed before any topology model was
trained. In the discussion below, ``$+$TDA $+$ Sheaf'' refers to the full TopoPrimer model.  A consistent pattern emerges: the sheaf is the primary driver of gains across all domains, and TDA alone never improves over vanilla.
TDA alone lacks the per-series resolution to differentiate individual series. TDA is a population-level signal, and without sheaf coordinates to anchor it locally, it cannot know where in the population a given series sits.  We discuss each dataset in turn.

\begin{table}[t]
\centering
\caption{MAE and secondary metric across four public benchmarks. \textbf{Bold} = best per section. $\downarrow$ lower is better. In-table naming: ``$+$TDA $+$ Sheaf'' is the full TopoPrimer model. Secondary
metrics: METR-LA ($H{=}15$ steps at 5-min intervals, 207 sensors) uses MAPE per traffic forecasting convention.
ECL ($H{=}96$, 321 clients) and Monash Weather ($H{=}30$,
3{,}010 variates) use MSE to weight peak-error sensitivity. M5 ($H{=}4$, Household, 9{,}890 items) uses WAPE for scale-free cross-item comparison.}
\label{tab:toposheaf_public}
\resizebox{\linewidth}{!}{%
\small\setlength{\tabcolsep}{5pt}%
\begin{tabular}{lcccccccc}
\toprule
& \multicolumn{2}{c}{\textbf{METR-LA}}
& \multicolumn{2}{c}{\textbf{ECL}}
& \multicolumn{2}{c}{\textbf{Monash Weather}}
& \multicolumn{2}{c}{\textbf{M5}} \\
\cmidrule(lr){2-3}\cmidrule(lr){4-5}\cmidrule(lr){6-7}\cmidrule(lr){8-9}
\textbf{Model}
  & \textbf{MAE$\downarrow$} & \textbf{MAPE\%$\downarrow$}
  & \textbf{MAE$\downarrow$} & \textbf{MSE$\downarrow$}
  & \textbf{MAE$\downarrow$} & \textbf{MSE$\downarrow$}
  & \textbf{MAE$\downarrow$} & \textbf{WAPE$\downarrow$} \\
\midrule
\multicolumn{9}{l}{\textit{Transformer variants}} \\
Transformer                   & 2.206 & 3.812 & \textbf{0.193} & \textbf{0.091} & 2.175 & 25.935 & 1.866 & 0.264 \\
Transformer $+$ TDA           & 2.206 & 3.812 & 0.197 & 0.102 & 2.170 & 26.182 & 1.865 & 0.264 \\
Transformer $+$ TDA $+$ Sheaf & \textbf{2.203} & \textbf{3.809} & 0.196 & \textbf{0.091} & \textbf{2.004} & \textbf{25.143} & \textbf{1.827} & \textbf{0.259} \\
\midrule
\multicolumn{9}{l}{\textit{Chronos 2.0 variants}} \\
Chronos Zero-Shot             & 3.348 & 5.615 & 0.586 & 0.610 & 2.344 & 29.776 & \textbf{0.918} & \textbf{1.450} \\
Chronos Vanilla Adapter       & 2.383 & 4.087 & 0.302 & 0.205 & 2.015 & 28.487 & 1.040 & 1.643 \\
Chronos $+$ TDA               & 2.392 & 4.091 & 0.302 & 0.205 & 2.031 & 28.381 & 1.039 & 1.641 \\
Chronos $+$ TDA $+$ Sheaf     & \textbf{2.378} & \textbf{4.063} & \textbf{0.290} & \textbf{0.190} & \textbf{1.941} & \textbf{27.773} & 1.025 & 1.618 \\
\midrule
\multicolumn{9}{l}{\textit{TimesFM 2.5 variants}} \\
TimesFM Zero-Shot             & 2.441 & 4.200 & 0.580 & 0.602 & 2.032 & 28.032 & \textbf{0.914} & \textbf{1.443} \\
TimesFM Vanilla Adapter       & 2.355 & 4.058 & 0.300 & 0.204 & 2.038 & 28.173 & 1.037 & 1.636 \\
TimesFM $+$ TDA               & 2.356 & 4.064 & 0.300 & 0.204 & 2.067 & 28.212 & 1.034 & 1.632 \\
TimesFM $+$ TDA $+$ Sheaf     & \textbf{2.336} & \textbf{4.033} & \textbf{0.289} & \textbf{0.190} & \textbf{1.974} & \textbf{27.875} & 1.025 & 1.618 \\
\bottomrule
\end{tabular}%
}
\end{table}

\paragraph{METR-LA.}
TDA alone provides no lift and slightly degrades Chronos (MAE 2.383 $\to$ 2.392). This is
consistent with the sparse $H_1$ pre-screen verdict (Table~\ref{tab:screening}): injecting a near-empty topology fingerprint adds noise without useful structure.
The sheaf nonetheless retains a small consistent benefit even on this sparse
manifold. The full TopoPrimer model improves over Vanilla for every backbone, with
TimesFM reaching the best adapter result (MAE 2.355 $\to$ 2.336) and the Transformer the best
absolute result (MAE 2.203).

\paragraph{ECL.}
Topology gains on ECL are driven entirely by the sheaf. The vanilla Transformer achieves the lowest MAE overall (0.193). $+$TDA $+$ Sheaf improves MSE but slightly degrades MAE (0.193 $\to$ 0.196), consistent with a fully-trained model that has already internalized the domain's relational structure on this compact 321-series dataset. The frozen foundation model backbones lack this domain-specific exposure, so the
   sheaf provides a useful complement: $+$TDA $+$ Sheaf delivers consistent gains
  for both Chronos (MAE: 0.302 $\to$ 0.290) and TimesFM (MAE: 0.300 $\to$ 0.289).

\paragraph{Monash Weather.}
Chronos was pre-trained on the Monash corpus, placing this benchmark
in-distribution. Even in-distribution, Chronos $+$ TDA $+$ Sheaf achieves the best MAE across all
models (1.941), suggesting that in-distribution
  pre-training and topology are complementary. For both adapter families, TDA alone degrades relative to Vanilla
(Chronos MAE: 2.015 $\to$ 2.031; TimesFM MAE: 2.038 $\to$ 2.067), introducing conflicting signal without the per-series positional 
  grounding the sheaf provides. Adding the sheaf drives full recovery and further   
  improvement.
Under MSE (the primary Monash Weather metric), Transformer $+$ TDA $+$ Sheaf is the
best overall model (25.143).

\paragraph{M5.}
On M5, vanilla adapter training degrades from zero-shot performance for both TSFMs
(Chronos MAE: 0.918 $\to$ 1.040; TimesFM MAE: 0.914 $\to$ 1.037), consistent with
adapter overfitting on a calendar-dominated domain where the frozen backbone already
captures the main periodic structure.
$+$TDA alone changes MAE by at most $0.003$ across all backbones, confirming that
the artifact-inflated $H_1/N{=}4.12$ encodes no useful population-level signal. This is 
precisely what the screening criterion predicts (Table~\ref{tab:screening}).
$+$TDA $+$ Sheaf recovers consistent gains from the degraded adapter baselines, with
both TSFM backbones converging to MAE~1.025.
The Transformer, unaffected by adapter overfitting, shows a direct 2.1\% MAE
improvement ($1.866 \to 1.827$).

\paragraph{Cross-backbone synthesis.}
Across all benchmarks, sheaf coordinates are the primary driver of improvement;
  TDA alone provides no consistent improvement over vanilla and occasionally degrades.
  For the Transformer, gains scale with manifold richness, with 7.9\% MAE reduction
  on $H_1$-rich Monash Weather and only marginal gains on $H_1$-sparse METR-LA.
  For foundation model backbones, $+$TDA $+$ Sheaf consistently matches or beats
  the vanilla adapter on every domain. Chronos
  extracts the largest gain on ECL ($-$4.0\% MAE, $-$7.3\% MSE), with TimesFM
  close behind. Full per-horizon ECL results appear in Appendix~\ref{app:ecl_full}.

\subsection{Three Hard Regimes: Fine-Tuning Robustness, Seasonal Spikes, and Cold Start}
\label{sec:three_regimes}
Open benchmarks cannot answer three questions that matter in practice: does topology help when the backbone is already fine-tuned? Does it hold up under peak seasonal demand? And does it work at entity launch, when a series has no history?

 We evaluate on an internal dataset of
  $N{=}307{,}818$ active series across $4{,}575$  entities and
  $603$ items. The domain is large enough to test all three regimes and has a sparse manifold ($H_1{=}617$, $H_1/N{=}0.002$), placing it in the 
genuine-sparsity regime of
  Table~\ref{tab:screening}. Despite the sparse manifold, both sheaf coordinates and TDA contribute meaningful gains, confirming that each component is effective independently of manifold richness.

  We compute two TDA fingerprints from the internal corpus: an entity-manifold fingerprint (TDA$_E$) and an item-manifold fingerprint (TDA$_I$). On public benchmarks, +TDA corresponds to TDA$_E$
  alone, as item-resolution depth is insufficient to compute TDA$_I$. All three internal evaluations use both. 
  Fine-tuning robustness is evaluated using Chronos adapter families atop zero-shot and fine-tuned backbones.
  Seasonal-spike and cold-start evaluations use the
  fully-trained Transformer family. Across both families, architecture and training are held fixed so that topology is the sole variable.

\subsection{Fine-Tuning Robustness}
A natural concern is that fine-tuning on in-domain data should subsume any topological signal, making TopoPrimer redundant once the backbone is adapted. This hypothesis does not hold. 

We evaluate topology adapters atop both a frozen zero-shot Chronos checkpoint and a checkpoint fine-tuned on the internal corpus. Despite the two backbones differing substantially in domain adaptation, the topology gain is nearly identical: ($\Delta\mathrm{MAE} = {-}0.022$) on zero-shot Chronos and ($\Delta\mathrm{MAE} = {-}0.024$) on fine-tuned Chronos. Fine-tuning moves the baseline, but it does not absorb the topological signal.

The invariance is expected: the univariate fine-tuning objective has no mechanism to recover cross-series structural information. Topology and fine-tuning address different aspects of the problem and their benefits are additive. Full results appear in Appendix~\ref{app:ft_robustness}.

\subsection{Seasonal Spikes}
  \label{sec:seasonal_spikes}
The sharpest forecasting test in practice is peak seasonal demand, where distribution shift is both large and predictable in timing but not magnitude. We evaluate over the dataset's sharpest four-week
  annual demand window.
  Classical baselines and zero-shot TSFMs degrade substantially:
  XGBoost~\citep{chen2016xgboost} MAE rises from 2.272 to 3.368 (+48\%), DLinear~\citep{zeng2023transformers} from
  2.089 to 3.060 (+46\%), and Chronos zero-shot from 1.853 to
  2.780 (+50\%). The vanilla Transformer is the strongest
  non-topology baseline (+12\%), as training on full annual
  cycles gives it implicit seasonal knowledge. Even so, it
  enters the window already above every topology variant.

  Where all other models surge, topology variants remain nearly
  flat (Figure~\ref{fig:coldstart_mae_2}(b),
  Table~\ref{tab:seasonality_mae}).
  The best-performing model is Transformer$+$TDA$_E$$+$TDA$_I$$+$Sheaf, finishing with MAE 1.924: 43\% below XGBoost,
  31\% below Chronos, and 15\% below vanilla Transformer.
  The topological prior encodes the global co-movement structure
  of the item manifold explicitly, giving each model a stable
  geometric anchor as demand patterns~shift.

\begin{figure*}[htbp]
  \centering
  \includegraphics[width=0.95\linewidth]{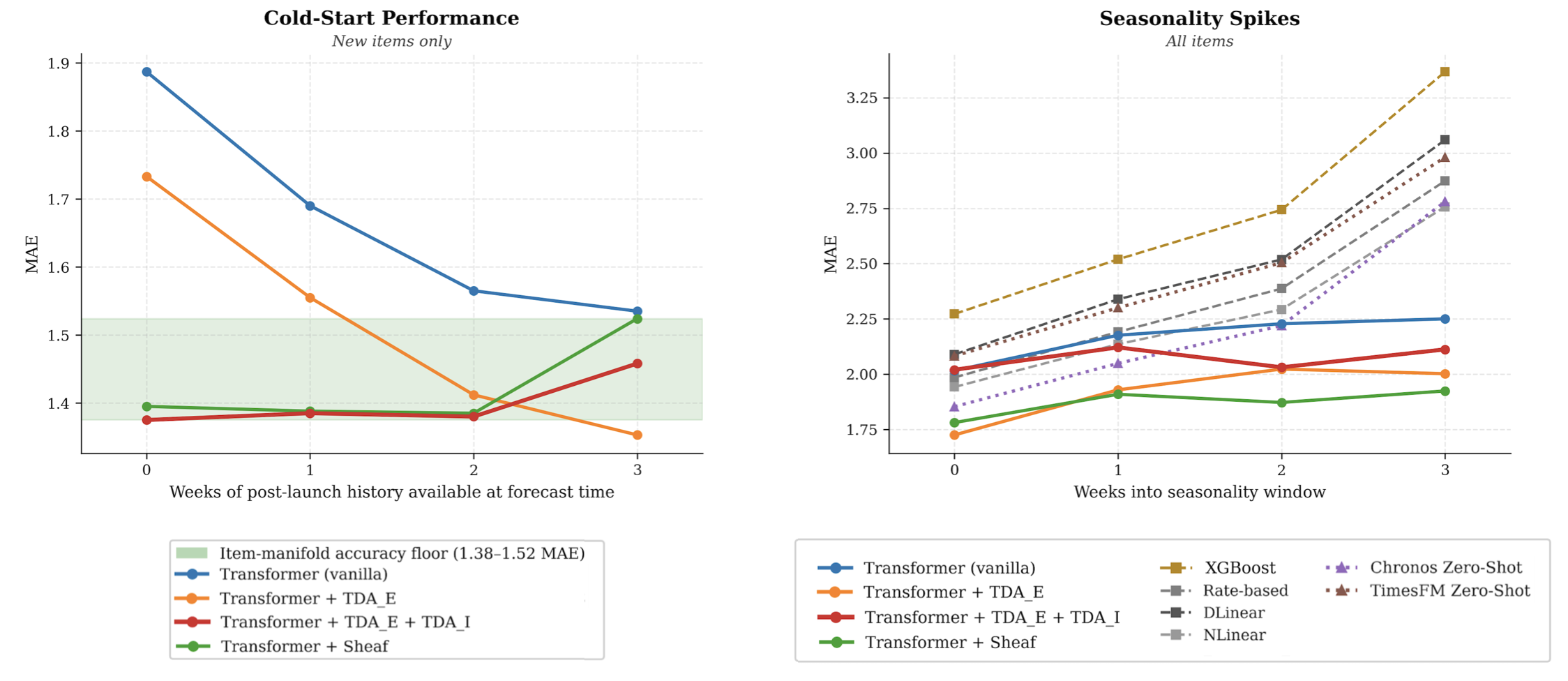}
   \caption{%
   \textbf{(a) Cold-start performance} (new items only, $N=40{,}324$ series):
    MAE vs.\ weeks of post-launch history available. All models receive a
    zero context at week~0; the shaded band (MAE 1.38--1.52) marks the
    accuracy floor TDA$_I$ topology variants maintain from launch, while vanilla
    Transformer (MAE~1.887 at week~0) never enters it despite training on
    the full population.
    \textbf{(b) Seasonality window} (all items): MAE over four weeks of
    peak demand. Classical baselines and zero-shot TSFMs degrade sharply
  (+46--50\%). Topology variants degrade only marginally, with
  Transformer$+$TDA$_E$$+$TDA$_I$$+$Sheaf maintaining the lowest MAE
  from week 1 onward, attributable to explicit cross-item seasonal
  structure encoded in the population manifold.%
  }
  \label{fig:coldstart_mae_2}
\end{figure*}

\subsection{Cold Start}
\label{sec:coldstart}
A new item has no history, but it has a position in the manifold, and with TopoPrimer that position is available from the first forecast step. At launch, every model receives a 52-week all-zero context window. Classical baselines and zero-shot TSFMs are effectively forecasting from zeros, making them null comparisons at week 0. The meaningful comparison is within the Transformer family, where architecture, training procedure, and temporal encoding are identical. Topology is the sole variable (Figure~\ref{fig:coldstart_mae_2}(a), Table~\ref{tab:cold_start_mae}).

The advantage is immediate. At week 0, before a single week of post-launch history exists,   Transformer$+$TDA$_E$$+$TDA$_I$ achieves MAE 1.375, and the Transformer$+$TDA$_E$$+$TDA$_I$$+$Sheaf variant achieves MAE 1.395. Both are 26--27\% below the vanilla Transformer (1.887). The vanilla Transformer has no way to locate a new item within the manifold, but topology supplies exactly this missing position from the first forecast step. As post-launch history accumulates, the vanilla Transformer improves steadily (1.887 $\to$ 1.535 MAE by week 3), converging toward the topology variants. This suggests that topology is filling the gap that history would otherwise fill. The sheaf variant remains the most consistent across all weeks, holding MAE below 1.40 from week 0 through week 2, while Transformer$+$TDA$_E$ achieves the best single result at week 3 (MAE 1.353).

\paragraph{Three regimes, one structural limit.}
The pattern is consistent across all three regimes: fine-tuning robustness, seasonal spikes, and cold start. Where training signal is sufficient, topology amplifies it. Where it is absent or misleading, topology substitutes for it. In every case, a topological prior precomputed from the full population unlocks a source of
   signal that training alone does not recover.

\section{Conclusion}
\label{sec:conclusion}

The topology of a series population, both its global shape and the relational
  position of each series within it, is a recoverable, frozen signal that no
  backbone-agnostic framework encodes as a topological prior.
TopoPrimer shows
that injecting this signal as a precomputed context vector is sufficient to close
gaps that per-series training signal alone does not resolve: cold-start,
  peak-demand windows, and robustness under fine-tuning.
Critically, the topology gain is backbone-agnostic and survives fine-tuning with
near-identical magnitude, which provides evidence that gradient descent on
per-series losses and population-level geometry are learning complementary
things.

Topology screening provides a lightweight pre-deployment diagnostic: the $H_1/N$
  density criterion identifies domains where the correlation manifold carries genuine
  cyclic structure and TDA will amplify the sheaf-driven gains, versus domains where
  improvement will be sheaf-driven alone.  The topological features themselves admit two natural extensions: (i) 
  a learned filtration metric that recovers topological structure obscured by Pearson
   distance; and (ii) multi-parameter persistent homology filtrating along scale and
  demand volatility jointly, to capture geometry that single-parameter summaries
  miss.


\paragraph{Broader impacts.}
Improved forecasting accuracy in energy, retail, and logistics reduces
over-production, inventory waste, and resource consumption. Because
TopoPrimer's topology features are precomputed once per domain and reused
across all inference, the marginal cost of the topological context is
negligible.

\paragraph{Limitations.}
The fine-tuning robustness, seasonal-spike, and cold-start evaluations in
Sections~4.3--4.6 rely on an internal corpus that cannot be released.
The topological features and adapter architecture are fully reproducible
on public data; we provide the complete evaluation protocol, hyperparameter
settings, and statistical tests (Appendices~\ref{app:ablations}
and~\ref{app:internal_coldstart}).

\section*{Acknowledgments}
We are deeply grateful to Mohammadreza Armandpour for his honest, thoughtful feedback and review sessions that shaped this work. We also thank Jordan Mittleman for his generous time and care in helping bring this paper to life.


\appendix

\newpage

\section{Mathematical Preliminaries}
\label{app:prelim}

\subsection{Simplicial Complexes and Filtrations}

A \emph{simplicial complex} $\mathcal{K}$ is a collection of simplices
(vertices, edges, triangles, and higher-dimensional analogues) closed under
taking faces: if $\sigma \in \mathcal{K}$ is a simplex and
$\tau \subseteq \sigma$ is a face, then $\tau \in \mathcal{K}$.
A \emph{filtration} is a nested sequence of complexes
$\emptyset \subseteq \mathcal{K}_{\varepsilon_1} \subseteq
\mathcal{K}_{\varepsilon_2} \subseteq \cdots \subseteq \mathcal{K}$
parameterised by a growing threshold $\varepsilon \geq 0$.
The Vietoris-Rips construction below is a canonical way to build
such a filtration from pairwise distances.

\subsection{Vietoris-Rips Filtration}

Given a finite set of points $\mathcal{P}$ with pairwise distances $d$, the
\emph{Vietoris-Rips complex} at scale $\varepsilon$ admits every subset of
$\mathcal{P}$ whose pairwise distances all fall within $\varepsilon$:
\[
  \mathrm{VR}(\mathcal{P}, \varepsilon)
  = \bigl\{\, \sigma \subseteq \mathcal{P}
    \;\big|\; \max_{p,q \in \sigma} d(p,q) \leq \varepsilon \bigr\}.
\]
We set $\mathcal{P} = \{\mathbf{x}_1,\ldots,\mathbf{x}_N\}$ where each point is
one series and $d_{ij} = 1 - \lvert\rho_{ij}\rvert$ is the Pearson correlation distance.
This yields diagrams over the \emph{series population}, revealing which series cluster
($H_0$), which cyclic co-movements exist ($H_1$), and which structural boundaries
separate regimes ($H_2$). These are population-level descriptors, not per-series properties.

\subsection{Persistent Homology}

Persistent homology tracks how the topology of a Vietoris-Rips filtration
changes as $\varepsilon$ increases.
The $k$-th homology group $H_k(\mathcal{K})$ counts independent $k$-dimensional
holes. \emph{Persistent homology}~\citep{carlsson2009topology} tracks how homology
classes are born and die as $\varepsilon$ increases. Each feature is recorded as a
birth-death pair $(b,d)$, forming the persistence diagram $\mathrm{PD}_k$.
Features with large persistence $d-b$ represent robust structural properties;
short-lived features are noise.

\subsection{Persistence Landscapes}

The \emph{persistence landscape}~\citep{bubenik2015statistical} maps
$\mathrm{PD}_k$ to a fixed-size vector:
\[
  \lambda_k^{(\ell)}(t) = \ell\text{-th largest value of}\;
  \bigl\{(\min(t-b,\,d-t))_{+} \;:\; (b,d)\in\mathrm{PD}_k\bigr\},
\]
where $(\cdot)_{+} = \max(0,\,\cdot)$.
Evaluating on a fixed grid yields an $r$-dimensional vector that is
Lipschitz-stable with respect to the bottleneck distance $d_B$
(Appendix~\ref{app:lipschitz}).

\subsection{Cellular Sheaves}

A \emph{cellular sheaf} $\mathcal{F}$ over a graph $\mathcal{G} =
(\mathcal{V}, \mathcal{E})$ assigns a stalk
$\mathcal{F}_v \cong \mathbb{R}^d$ to each node and a linear
\emph{restriction map} $\mathcal{F}_{v \trianglelefteq e}$ to each node-edge
incidence. The \emph{sheaf Laplacian} $\mathbf{L}_\mathcal{F} = \delta^\top\delta$
penalizes deviation from global consistency:
\[
  \mathbf{x}^\top \mathbf{L}_\mathcal{F} \mathbf{x}
  = \sum_{(u,v)\in\mathcal{E}} w_{uv}
    \bigl\|\mathcal{F}_{u\trianglelefteq e}(\mathbf{x}_u)
    - \mathcal{F}_{v\trianglelefteq e}(\mathbf{x}_v)\bigr\|^2.
\]
For the spectral encoder used in all
  reported results, the restriction maps are
  identity and the harmonic section is spanned by
  the leading left singular vectors of the
  entity-time matrix, requiring no training. Section~\ref{sec:sheaf} describes both encoder
  variants and the integration in full.

\paragraph{Derivation: identity maps reduce to leading singular vectors.}
Setting all restriction maps to the identity,
$\mathcal{F}_{v \trianglelefteq e} = I$, the sheaf Laplacian
reduces to the standard weighted graph Laplacian
$\mathbf{L}_\mathcal{F} = \mathbf{D} - \mathbf{W}$,
where $\mathbf{W}_{uv} = w_{uv}$ is the edge-weight matrix
and $\mathbf{D}$ is the degree matrix.

We build the graph with weights $w_{uv} = |\rho_{uv}|$
(absolute Pearson correlations between entity time series).
The Pearson correlation $\rho_{uv}$ equals the cosine similarity of
$L^2$-normalised rows of the entity-time matrix
$\mathbf{M} \in \mathbb{R}^{N \times T}$, so the weight matrix
satisfies $\mathbf{W} \approx \tfrac{1}{T}|\mathbf{M}\mathbf{M}^\top|$
(element-wise absolute value) up to row-normalisation.

The \emph{harmonic section} minimises the consistency loss
over a $k$-dimensional subspace:
\[
  \min_{\mathbf{X} \in \mathbb{R}^{N \times k},\;
        \mathbf{X}^\top\mathbf{X}=I}
  \mathrm{tr}\!\left(\mathbf{X}^\top \mathbf{L}_\mathcal{F}
  \mathbf{X}\right).
\]
Substituting $\mathbf{L}_\mathcal{F} = \mathbf{D} - \mathbf{W}$
gives $\mathrm{tr}(\mathbf{X}^\top\mathbf{D}\mathbf{X}) -
\mathrm{tr}(\mathbf{X}^\top\mathbf{W}\mathbf{X})$.
For a $d$-regular graph, $\mathrm{tr}(\mathbf{X}^\top\mathbf{D}\mathbf{X})
= d\,\mathrm{tr}(\mathbf{X}^\top\mathbf{X}) = dk$
is constant under the orthonormality constraint, reducing the minimization to
\[
  \max_{\mathbf{X}^\top\mathbf{X}=I}
  \mathrm{tr}\!\left(\mathbf{X}^\top \mathbf{W} \mathbf{X}\right).
\]
By the Rayleigh--Ritz theorem, the solution is
$\mathbf{X} = \mathbf{U}_k$, the matrix of leading eigenvectors
of $\mathbf{W}$. Since $\mathbf{W} \approx
\tfrac{1}{T}|\mathbf{M}\mathbf{M}^\top|$ (element-wise), and in
domains with predominantly positive cross-series correlations
$|\mathbf{M}\mathbf{M}^\top| \approx \mathbf{M}\mathbf{M}^\top$,
the leading eigenvectors of $\mathbf{W}$ are well-approximated by the
leading left singular vectors of
$\mathbf{M}$ (via $\mathbf{M}\mathbf{M}^\top \mathbf{u}_i =
\sigma_i^2 \mathbf{u}_i$). The harmonic section of the
identity-restriction sheaf is therefore spanned by the
truncated SVD of the entity-time matrix, requiring
no gradient-based training.

\section{Lipschitz Stability of Persistence Landscape Features}
\label{app:lipschitz}

\begin{theorem}[Lipschitz Stability]
Let $c$ and $c'$ be any two entity populations. Let $\mathbf{h}_c, \mathbf{h}_{c'} \in \mathbb{R}^{125}$ be their respective persistence landscape fingerprints and $\mathcal{D}_c, \mathcal{D}_{c'}$ their persistence diagrams. Then:
\[
  \|\mathbf{h}_c - \mathbf{h}_{c'}\|_2 \;\leq\; C \cdot d_B(\mathcal{D}_c,\,\mathcal{D}_{c'})
\]
where $\|\cdot\|_2$ denotes the Euclidean norm, $d_B$ is the bottleneck distance between persistence diagrams, and $C$ depends only on the landscape sampling grid.
\end{theorem}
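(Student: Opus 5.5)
The plan is to reduce the statement to the classical stability theorem for persistence landscapes of \citet{bubenik2015statistical}. That theorem says that for any two diagrams $D,D'$ and every layer $\ell\ge 1$,
\[
  \bigl\|\lambda^{(\ell)}_{D}-\lambda^{(\ell)}_{D'}\bigr\|_\infty \le d_B(D,D').
\]
Here $\mathcal{D}_c$ is the triple $(\mathrm{PD}_0,\mathrm{PD}_1,\mathrm{PD}_2)$. We interpret $d_B(\mathcal{D}_c,\mathcal{D}_{c'})$ as the maximum of the per-dimension bottleneck distances. If one prefers the sum instead, it only dominates the maximum, so the bound remains valid.

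\textbf{Step 1: pointwise bound for a single layer.} First we prove the layer bound directly, to keep the argument self-contained. Each tent function $f_{(b,d)}(t)=(\min(t-b,d-t))_+$ is $1$-Lipschitz in $(b,d)$ under the $\ell_\infty$ metric on $\mathbb{R}^2$. Diagonal points have tent $\equiv 0$, and a point matched to the diagonal has tent of height at most $(d-b)/2$.

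Take an optimal bottleneck matching $\gamma$ with cost $\varepsilon$, after augmenting both diagrams with diagonal points. Every tent of $D$ is then within $\varepsilon$ in sup-norm of its partner's tent. The $\ell$-th largest value of a multiset is $1$-Lipschitz under the sup-norm over any bijection of the multisets. This is the standard order-statistic argument: if $a_i\le b_{\pi(i)}+\varepsilon$ for all $i$, then the $\ell$-th largest $a$ is at most the $\ell$-th largest $b$ plus $\varepsilon$. It gives $|\lambda^{(\ell)}_D(t)-\lambda^{(\ell)}_{D'}(t)|\le\varepsilon$ for every $t$. Infinite bars, such as the essential $H_0$ class, must either be truncated at a fixed maximal filtration value (we use $\max D_{ij}\le 1$) or matched to each other. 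This truncation is the one technical point to state explicitly.

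\textbf{Step 2: assemble the 125-vector.} The fingerprint $\mathbf{h}_c$ stacks samples $\lambda^{(\ell)}_k(t_j)$ over the fixed grid $\{t_j\}_{j=1}^{25}$. It uses $(k,\ell)\in\{(0,1),(0,2),(1,1),(1,2),(2,1)\}$, for $125$ coordinates in total. By Step 1, each coordinate differs by at most $d_B(\mathrm{PD}_k,\mathrm{PD}'_k)\le d_B(\mathcal{D}_c,\mathcal{D}_{c'})$. Hence
\[
  \|\mathbf{h}_c-\mathbf{h}_{c'}\|_2\le \sqrt{125}\; d_B(\mathcal{D}_c,\mathcal{D}_{c'}).
\]
So $C=\sqrt{125}$, or $\sqrt{r}$ for a grid of $r$ total samples. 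It depends only on the sampling grid, as claimed. If coordinates are rescaled, for example by normalizing with grid spacing or weights $w_j$, then $C=(\sum w_j^2)^{1/2}$ instead.

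\textbf{Main obstacle.} The main difficulty is a matter of definitions rather than analysis. The paper's fingerprint may rely on implementation details: infinite $H_0$ bars, and grid endpoints chosen per domain, for instance from the diagram's own range. With data-dependent grids the two vectors sample different $t$-values, and the bound fails. We would therefore fix a common grid across populations, e.g.\ on $[0,1]$, which is natural since $D_{ij}\in[0,1]$, and handle infinite bars by truncation at $1$. With those conventions the order-statistic Lipschitz lemma is the only real content.
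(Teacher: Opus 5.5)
Your proposal is correct and follows the same route as the paper's proof sketch: invoke Bubenik's $1$-Lipschitz sup-norm stability of persistence landscapes with respect to $d_B$, then pay a factor $\sqrt{125}$ to pass from the per-coordinate bound to the Euclidean norm. Your version is slightly more complete than the sketch, which mentions only $\lambda_1$: you also cover the $\lambda_2$ layers used for $H_0$ and $H_1$, and you make explicit three conventions the statement tacitly needs, namely a common fixed sampling grid, the max-over-dimensions reading of $d_B(\mathcal{D}_c,\mathcal{D}_{c'})$, and the treatment of infinite bars.
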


\textit{Proof sketch.}
The persistence landscape $\lambda_1$ is $1$-Lipschitz with respect to
the bottleneck distance~\citep{bubenik2015statistical}.
Stacking over $H_0, H_1, H_2$ introduces at most an $\ell^2$-to-supremum
factor of $\sqrt{125}$.~$\square$

\medskip

This stability result has three direct consequences for TopoPrimer.

\begin{enumerate}[noitemsep, topsep=2pt]
  \item \textbf{Noise robustness.}
    Missing weeks or measurement errors produce proportionally bounded
    perturbations to $\mathbf{h}_c$.

  \item \textbf{Cold-start coverage.}
    At launch, a new item has no individual history but receives three topology signals from TopoPrimer: a shared entity-manifold fingerprint (TDA$_E$), an item-manifold descriptor (TDA$_I$), and a per-item sheaf coordinate approximated from relational neighbors. All three are precomputed offline and require no per-item history. TDA$_E$ provides population-level structural context but does not place a new item within the item manifold; TDA$_I$ supplies that item-level positioning from day one (Section~\ref{sec:coldstart}).

  \item \textbf{Backbone agnostic.}
    The TDA fingerprint is precomputed once from the population and frozen---its $\ell_2$ norm is fixed at computation time and never updated during training or inference. A fixed-norm input introduces no gradient-driven drift, so it propagates as a reliable conditioning signal regardless of downstream model architecture. This provides a theoretical basis for consistent topology gains across backbone families.
\end{enumerate}

\clearpage
\section{Transformer Architecture}
\label{app:arch}

On the public benchmarks, three variants are evaluated: Vanilla, $+$TDA, and $+$TDA$+$Sheaf. All three share the same encoder and head and differ only in which topology block is active. On the internal corpus, a fourth variant stacks two topology blocks ($+$TDA$_E$$+$TDA$_I$$+$Sheaf); Section~\ref{sec:three_regimes} details this extended set.

\paragraph{Global context vector.}
For the public benchmarks, a context vector is formed by projecting the temporal features and the active topology block to $d_\text{model}{=}256$ (Table~\ref{tab:context-emb-public}). For the internal corpus, learned node, item, and category embeddings are concatenated with the temporal features before this projection (Table~\ref{tab:context-emb-internal}). The TDA fingerprint is the primary input to the main projection $\mathbf{W}_\text{ctx}$. The sheaf block, when active, receives a dedicated $\mathrm{Linear}(256{\to}256)$ projection whose output is summed with the main projection before broadcasting. A dedicated path is necessary because, when the sheaf coordinate shares a single joint projection with the TDA and entity embedding inputs, gradient descent assigns near-zero weights to the sheaf columns early in training, suppressing its contribution. The TDA fingerprint enters $\mathbf{W}_\text{ctx}$ as the primary input and does not face this suppression.

\begin{table}[H]
\centering\small
\caption{\textbf{Context vector components: public benchmarks.}
  Exactly one topology block is active per variant, or none for the vanilla baseline.}
\label{tab:context-emb-public}
\setlength{\tabcolsep}{6pt}
\begin{tabular}{lrl}
\toprule
\textbf{Feature} & \textbf{Dim} & \textbf{Notes} \\
\midrule
Temporal features  & 8 & Time-of-day and day-of-week encodings \\
\midrule
\textit{Topology (one active):} & & \\
\quad TDA block  & 125 & $H_0{+}H_1{+}H_2$ persistence landscape \\
\quad Sheaf block  & 256 & Per-series spectral relational feature \\
\bottomrule
\end{tabular}
\end{table}

\begin{table}[H]
\centering\small
\caption{\textbf{Context vector components: internal corpus.}
  Learned entity embeddings are concatenated with temporal features before the main projection.
  Topology blocks are stacked across variants; see Section~\ref{sec:three_regimes} for the full variant list.}
\label{tab:context-emb-internal}
\setlength{\tabcolsep}{6pt}
\begin{tabular}{lrl}
\toprule
\textbf{Feature} & \textbf{Dim} & \textbf{Notes} \\
\midrule
Node embedding   & 256 & Learned lookup, one per graph node \\
Item embedding   & 256 & Learned lookup, one per item type  \\
Category embedding & 64 & Learned lookup, all category levels \\
Temporal features  & 16 & Week-of-year sinusoid + fiscal-quarter one-hot \\
\midrule
\textit{Topology (stackable):} & & \\
\quad TDA$_E$ block  & 125 & Entity-manifold $H_0{+}H_1{+}H_2$ landscape \\
\quad TDA$_I$ block  & 125 & Item-manifold $H_0{+}H_1{+}H_2$ landscape \\
\quad Sheaf block  & 256 & Per-series spectral relational feature \\
\bottomrule
\end{tabular}
\end{table}

\paragraph{Encoder and head.}
Each demand scalar is embedded by $\mathrm{Linear}(1{\to}256)$, added to the
broadcast context, and augmented with sinusoidal positional encoding. The encoder
comprises six identical pre-norm layers, each with 8-head scaled dot-product
attention (head dim 32), a feed-forward network (hidden dimension 1024, GELU), and
dropout 0.10. The forecast head decodes the final-position representation:
$\mathrm{Linear}(256{\to}256) \to \mathrm{ReLU} \to \mathrm{Dropout}(0.1)
\to \mathrm{Linear}(256{\to}H{\times}Q)$, producing 9 quantile estimates
($\mathcal{Q}{=}\{0.02,0.10,0.20,0.30,0.50,0.70,0.80,0.90,0.98\}$) at each of $H$
forecast steps, where $H$ is the dataset's horizon (Table~\ref{tab:dataset-arch}).
Total parameters: ${\approx}8$M (internal corpus; smaller for public benchmarks
due to reduced entity embedding tables).

\paragraph{Optimizer and scheduler.}
All Transformer variants are trained with AdamW ($\beta_1{=}0.9$,
$\beta_2{=}0.999$, $\epsilon{=}10^{-8}$, weight decay $10^{-3}$, learning rate $10^{-4}$)
and a OneCycleLR scheduler (maximum learning rate $3{\times}10^{-4}$).
Topology adapters on frozen backbones use the same AdamW settings with
learning rate $3{\times}10^{-4}$ and a CosineAnnealingLR scheduler
(period $=$ number of epochs, minimum learning rate $= 0.1{\times}$ learning rate).
Loss is Huber quantile loss ($\delta{=}1.0$) over 9 output quantiles for all variants.

\paragraph{Dataset-specific settings.}
Table~\ref{tab:dataset-arch} lists the context window and forecast horizon for each
dataset. All variants share the encoder architecture above; only the sequence length,
horizon, and temporal feature dimensionality differ.

\begin{table}[H]
\centering\small
\caption{\textbf{Context and forecast horizon per dataset.}
  Context windows follow established benchmark protocols: ECL uses 96\,hr (4 days),
  matching the standard long-term forecasting evaluation~\citep{wu2021autoformer};
  METR-LA uses 12 steps (60\,min), the standard traffic forecasting protocol~\citep{li2018dcrnn}.
  ECL trains one checkpoint per horizon; all other datasets use a single horizon.
  METR-LA steps are 5-minute intervals.
  $^\dagger$Internal temporal features include week-of-year sinusoids and
  fiscal-quarter one-hot encoding; public datasets use time-of-day and
  day-of-week encodings (8\,dims).}
\label{tab:dataset-arch}
\setlength{\tabcolsep}{6pt}
\begin{tabular}{lrrl}
\toprule
\textbf{Dataset} & \textbf{Context} & \textbf{Horizon} & \textbf{Temp.\ dim} \\
\midrule
Internal corpus & 52 weeks  & 4 weeks                 & $16^\dagger$ \\
ECL             & 96 hr     & 96 / 192 / 336 / 720 hr & 8  \\
M5 (Walmart)    & 52 weeks  & 4 weeks                 & 8  \\
METR-LA         & 12 steps  & 15 steps                & 8  \\
Weather         & 300 days  & 30 days                 & 8  \\
\bottomrule
\end{tabular}
\end{table}

\section{Architecture Diagrams}
\label{app:arch_diagrams}

\begin{figure}[H]
  \centering
  \includegraphics[width=0.88\linewidth]{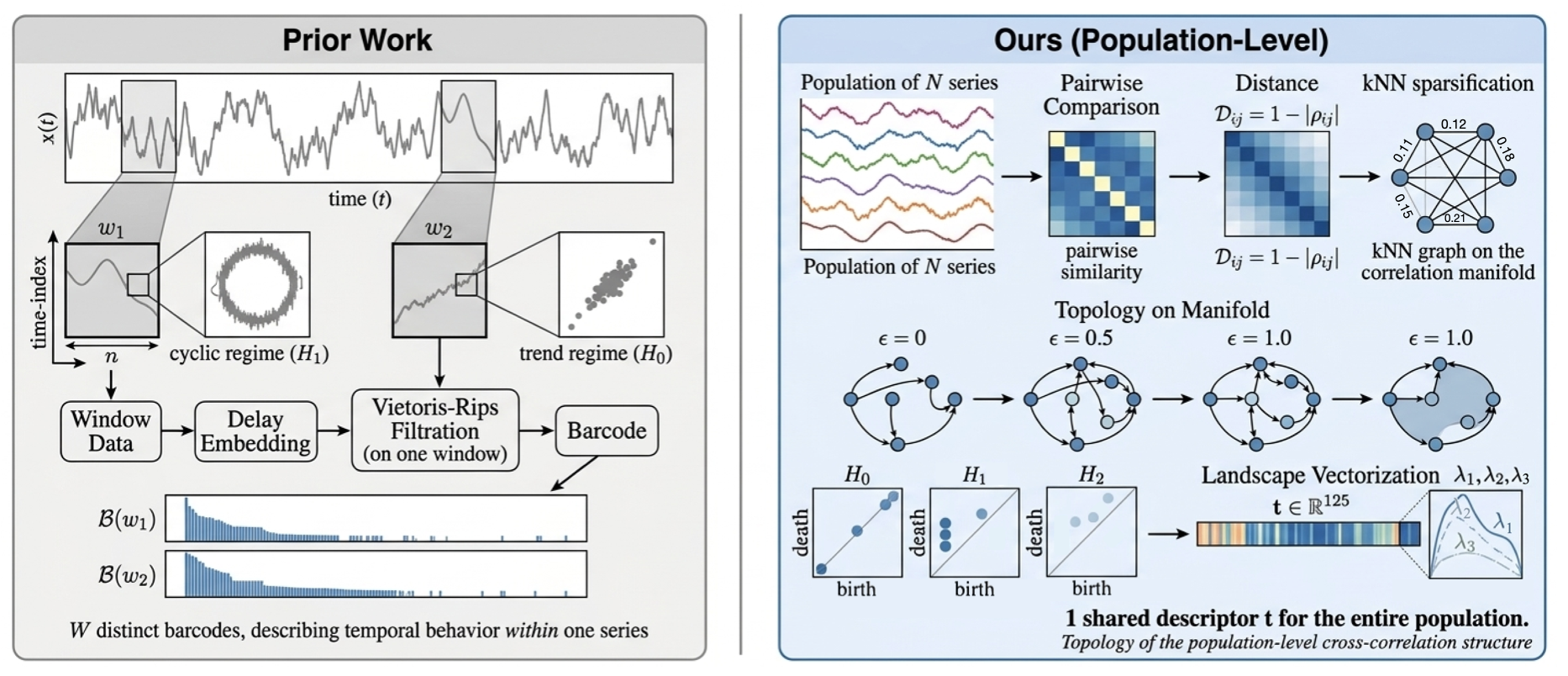}
  \caption{%
    \textbf{Per-series vs.\ population-level TDA.}
    Prior work (left) computes a separate vector per sliding window, yielding
    one descriptor per window of temporal dynamics \emph{within} one series. TopoPrimer (right) treats the full population as a point cloud, runs a single Vietoris-Rips filtration on the correlation manifold, and produces one
    125-dimensional persistence landscape vector shared across all series. TopoPrimer encodes structure
    that no individual trajectory contains.%
  }
  \label{fig:population_tda}
\end{figure}

Figure~\ref{fig:population_tda} illustrates the core methodological departure: prior work computes per-series vectors, while TopoPrimer applies a single filtration to the full population manifold. Figure~\ref{fig:sheaf_spectral} shows the complementary sheaf coordinate pipeline, which produces a 256-dimensional per-series spectral coordinate via truncated SVD of the entity-time matrix.

\begin{figure}[!ht]
  \centering
  \includegraphics[width=0.80\linewidth]{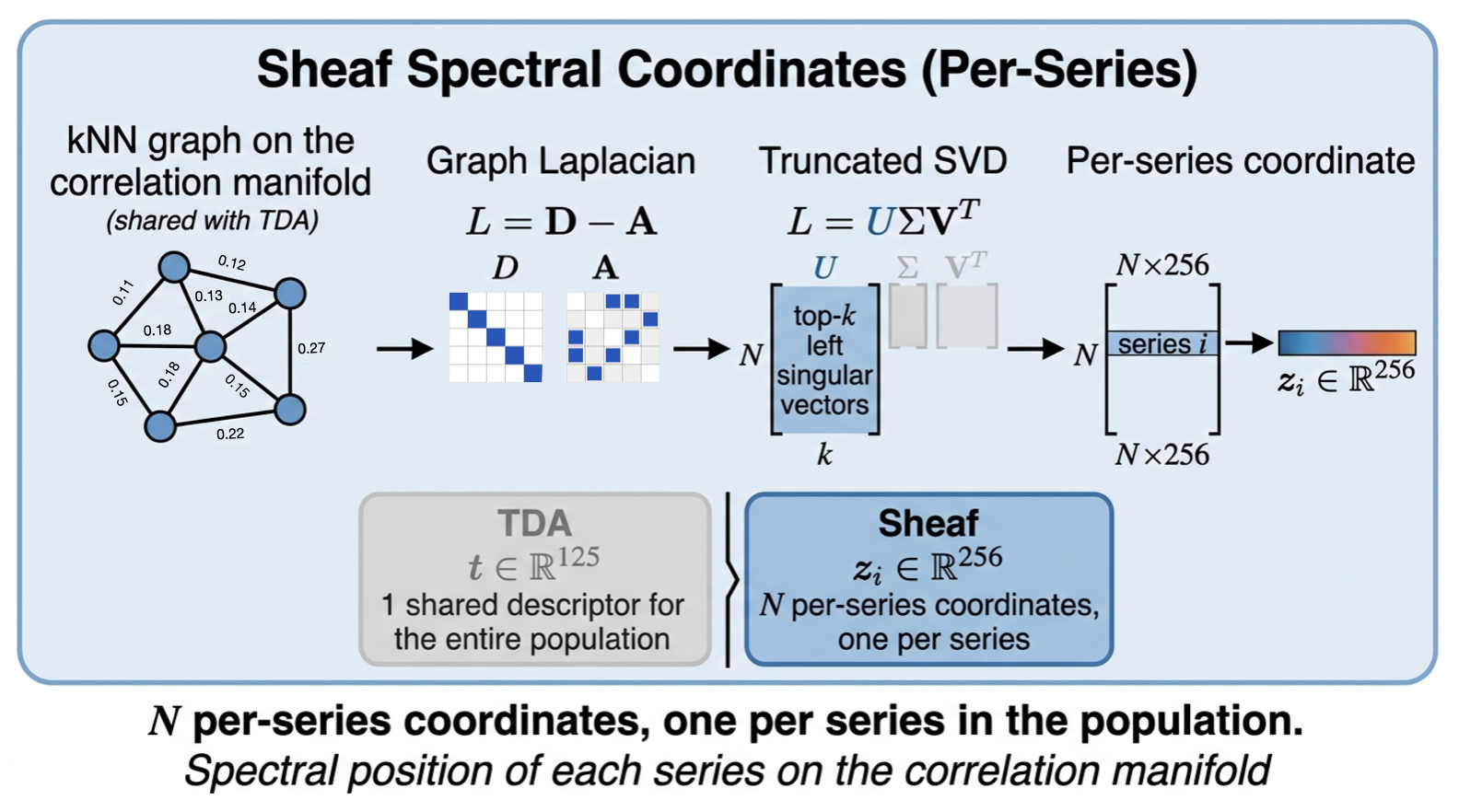}
  \caption{%
    \textbf{Sheaf spectral coordinate computation (per-series).}
    The kNN graph on the correlation manifold (shared with TDA) defines
    the edge weights used to form the entity-time matrix
    $\mathbf{X} \in \mathbb{R}^{N \times T}$.  A truncated SVD
    $\mathbf{X} \approx U\Sigma V^\top$ extracts the top-$k$ left singular vectors; row $i$ of
    $U$ is the 256-dimensional spectral coordinate $\mathbf{z}_i \in
    \mathbb{R}^{256}$ for series $i$. This yields $N$ per-series coordinates,
    one per series in the population, encoding each series' structural
    position on the correlation manifold. In contrast to the TDA fingerprint
    $\mathbf{t} \in \mathbb{R}^{125}$ (one shared vector for the entire
    population), the sheaf coordinate is per-series and relational.%
  }
  \label{fig:sheaf_spectral}
\end{figure}

Figure~\ref{fig:sheaf_spectral} illustrates how the same kNN graph that feeds
the TDA filtration yields a complementary per-series output: each series receives
a unique spectral coordinate $\mathbf{z}_i \in \mathbb{R}^{256}$ encoding its
structural position on the manifold, while the TDA fingerprint $\mathbf{t} \in
\mathbb{R}^{125}$ remains a single shared vector for the entire population.

\clearpage
\section{TDA Analysis of Public Benchmarks}
\label{app:homology}
\setlength{\floatsep}{4pt plus 1pt minus 1pt}
\setlength{\textfloatsep}{6pt plus 1pt minus 2pt}
\setlength{\abovecaptionskip}{4pt}
\setlength{\belowcaptionskip}{2pt}

The population-level TDA pipeline is illustrated in Figure~\ref{fig:population_tda} (Appendix~\ref{app:arch_diagrams}).
Most multi-panel figures in this section use a 2$\times$2 layout: ECL (top-left),
Monash Weather (top-right), M5 Household (bottom-left), METR-LA (bottom-right).
We place the two topology-rich benchmarks alongside the two null or sparse cases
for direct comparison.

Figure~\ref{fig:tda_landscape} overlays the persistence landscape vectors for all
four public benchmarks. $H_0$ (connected components) is broadly similar across
all datasets. Each population clusters into a small number of dominant groups at
coarse scales, contributing little discriminating signal. The diagnostic
information resides in $H_1$. Weather and ECL exhibit irregular, multi-scale peaks,
the signature of genuine cyclic co-movement distributed across the filtration
range. M5 instead shows evenly-spaced harmonic peaks consistent with calendar
repetition (7-day and 52-week periodicities shared by every item), not relational
geometry. These loops encode calendar artifact rather than manifold structure,
which is why the TDA fingerprint provides no useful grouping signal there.
METR-LA falls between these cases: ring roads and interchanges do produce a
non-trivial $H_1$ count (39--62 generators), but the network is predominantly
tree-like, so those cycles reflect road geometry rather than correlated demand
patterns. The $H_2$ panel (structural voids) shows meaningful activity only for
Weather and ECL, independently confirming that the fingerprint carries genuine
multi-dimensional manifold signal on those benchmarks. Taken together, these
landscape signatures are the visual basis for the pre-screening criterion in
Section~\ref{sec:screening}: the fingerprint distinguishes topology-rich from
topology-poor domains before any model is trained.

\begin{figure}[!ht]
  \centering
  \includegraphics[width=\textwidth]{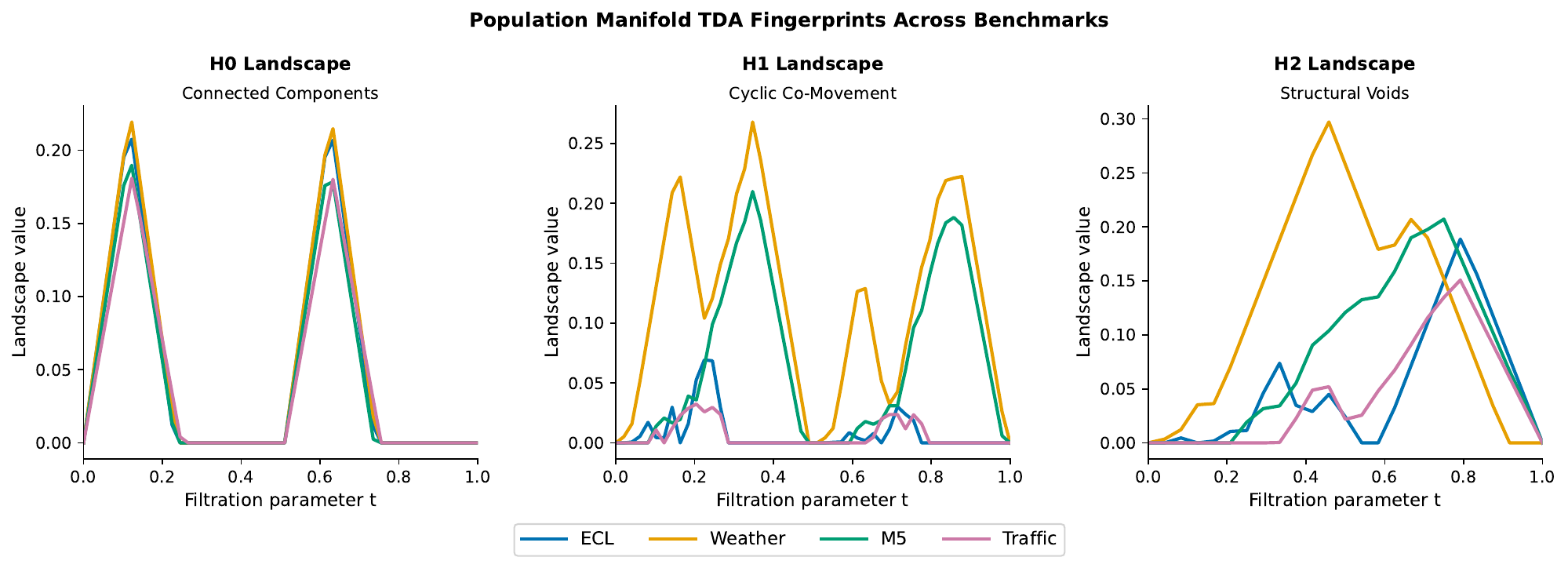}
  \caption{%
    \textbf{Population manifold TDA fingerprints across benchmarks.}
    Each curve is the 125-dimensional persistence landscape vector
    ($H_0$: 50 dims, $H_1$ : 50 dims, $H_2$: 25 dims) for one dataset.
    \textbf{$H_0$} (connected components): similar across all datasets; coarse
    cluster merging dominates the signal.
    \textbf{$H_1$ } (cyclic co-movement): the diagnostic panel. Weather and ECL
    exhibit irregular, multi-scale peaks indicative of genuine cyclic manifold
    structure. M5 shows evenly-spaced harmonics (calendar artifact, not relational
    geometry). METR-LA is near-flat (tree-like road hierarchy, few genuine cycles).
    \textbf{$H_2$} (structural voids): active only for Weather and ECL.%
  }
  \label{fig:tda_landscape}
\end{figure}

\subsection{Cross-Segment Comparison}
\label{app:cross_segment}

Figure~\ref{fig:cross_segment_all} shows how TDA feature vectors differ across
temporal or demographic splits within each dataset, alongside $H_1$ Wasserstein-2
distance matrices quantifying topological dissimilarity between segments. ECL and Weather exhibit
meaningful topology differences across their primary splits (weekday/weekend and
all-days/active-days respectively), confirming that the segments capture structurally
distinct regimes. M5 and METR-LA show weaker or noise-driven cross-segment variation,
consistent with calendar artifact and sparse topology respectively.

\begin{figure}[!ht]
  \centering
  \begin{subfigure}[t]{0.47\textwidth}
    \includegraphics[width=\linewidth]{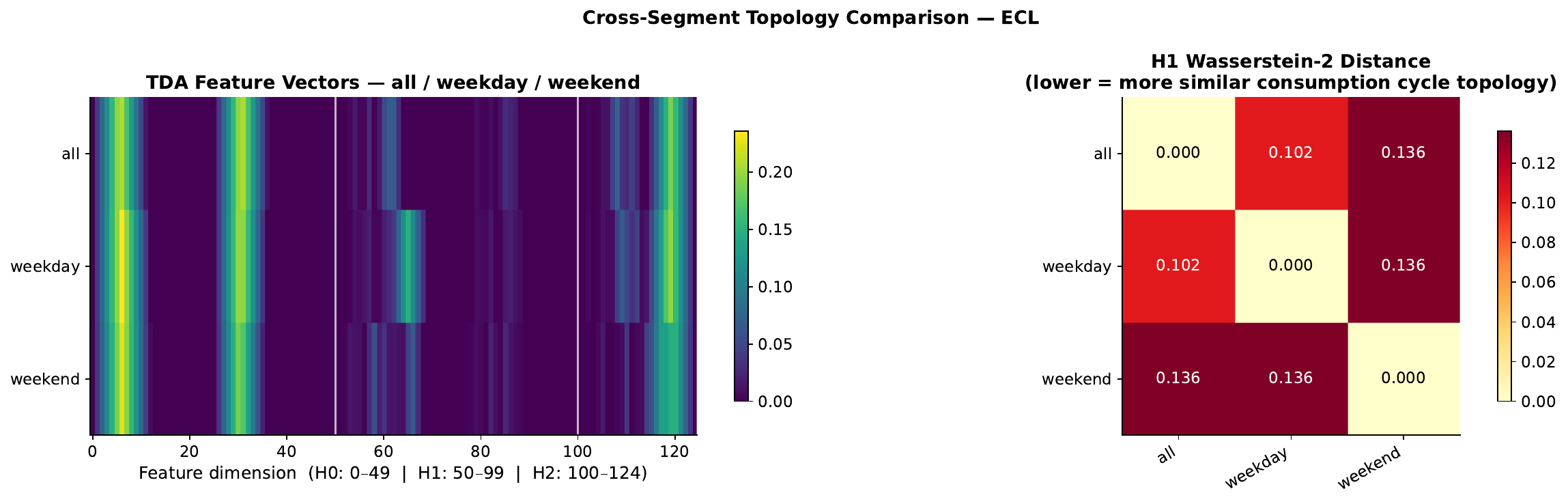}
    \caption{\textbf{ECL.} Weekday segments show higher $\beta_1$ and larger $H_1$ landscape norms than weekend, reflecting a genuine structural regime change driven by commercial usage cycles.}
    \label{fig:cs_ecl}
  \end{subfigure}
  \hfill
  \begin{subfigure}[t]{0.47\textwidth}
    \includegraphics[width=\linewidth]{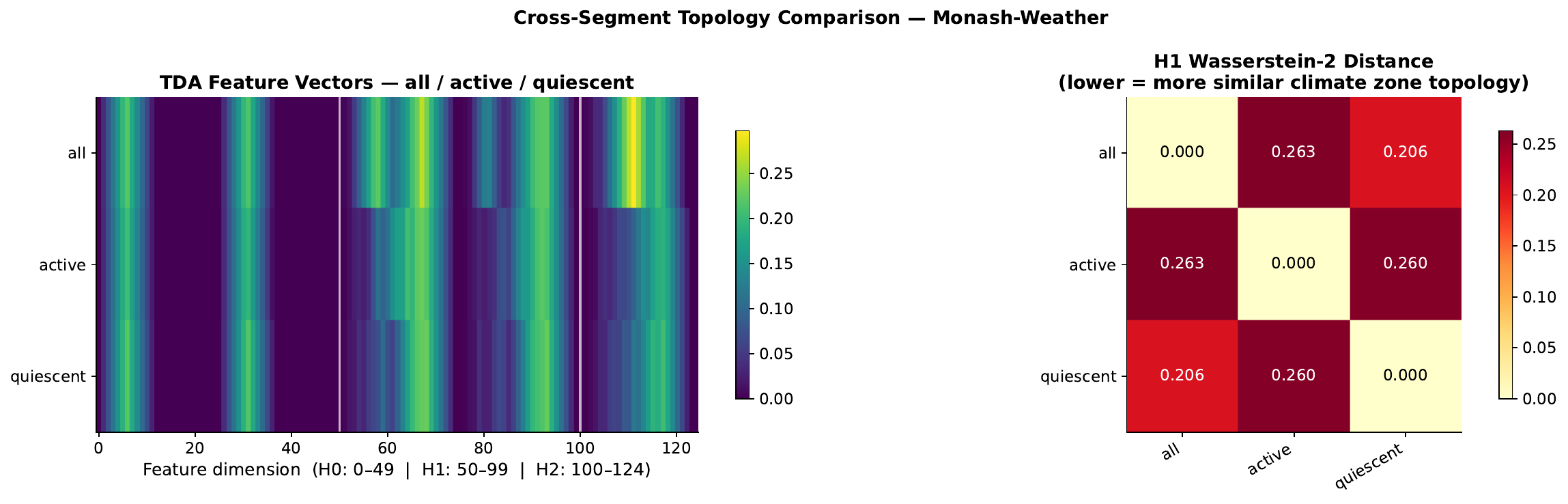}
    \caption{\textbf{Monash Weather.} Active-day segments show higher $\beta_1$ counts and larger $H_1$ landscape norms, confirming that missing observations weaken but do not eliminate the topological signal.}
    \label{fig:cs_weather}
  \end{subfigure}
  \\[3pt]
  \begin{subfigure}[t]{0.47\textwidth}
    \includegraphics[width=\linewidth]{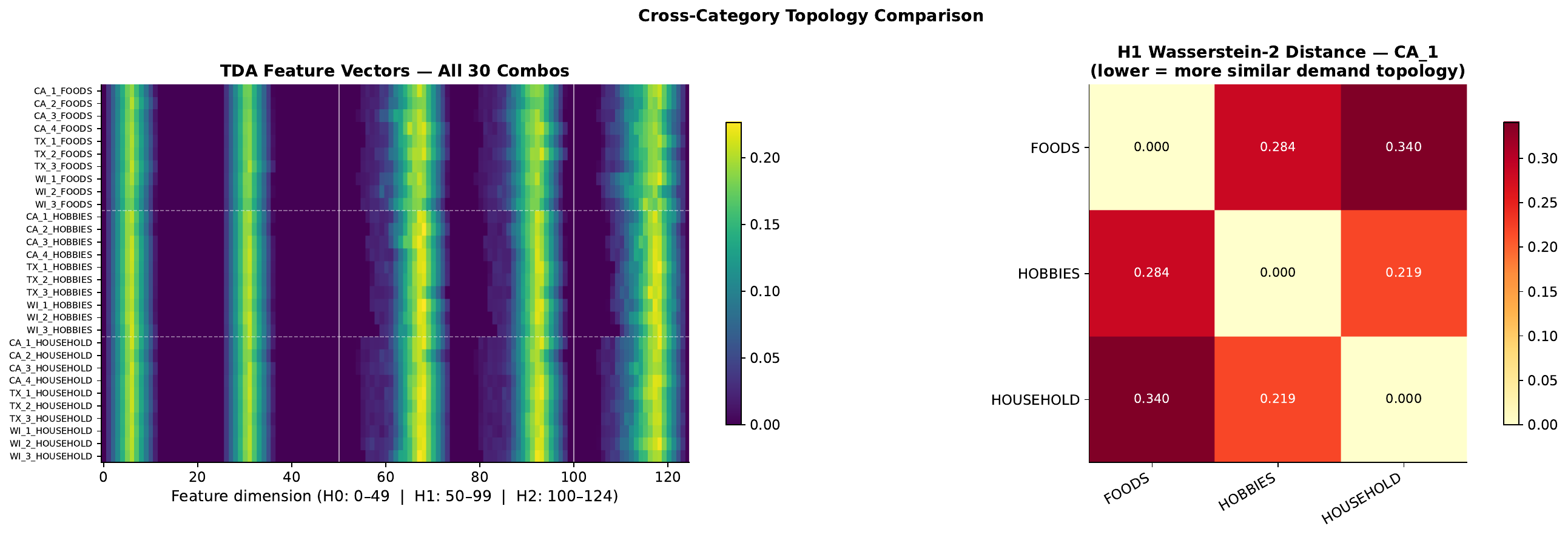}
    \caption{\textbf{M5 Household.} Cross-category TDA variation is modest relative to within-category calendar artifact, with small Wasserstein distances reflecting shared periodicity across all categories.}
    \label{fig:cs_m5}
  \end{subfigure}
  \hfill
  \begin{subfigure}[t]{0.47\textwidth}
    \includegraphics[width=\linewidth]{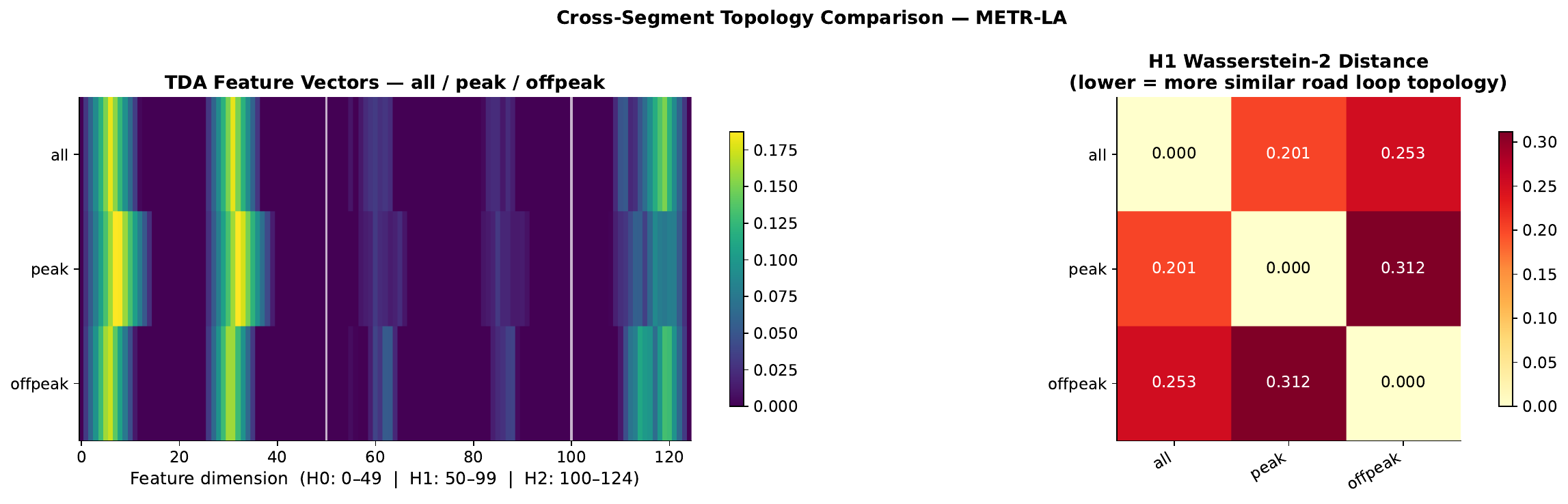}
    \caption{\textbf{METR-LA.} The off-peak segment shows higher $\beta_1$ counts and the largest Wasserstein-2 distance from peak, confirming that time-of-day fundamentally alters sensor correlation topology.}
    \label{fig:cs_traffic}
  \end{subfigure}
  \caption{\textbf{Cross-segment TDA comparison across all four public benchmarks.}
    Left panel of each subfigure: 125-dim TDA feature vectors per segment ($H_0$:
    dims~0--49, $H_1$: dims~50--99, $H_2$: dims~100--124). Right panel: $H_1$ Wasserstein-2
    pairwise distance matrix between segments (lower~=~more topologically similar).
    ECL and Weather show structurally meaningful cross-segment differences;
    M5 cross-category variation is dominated by shared calendar artifact;
    METR-LA shows the most pronounced peak/off-peak topological divergence.}
  \label{fig:cross_segment_all}
\end{figure}

\addvspace{10ex}

\subsection{Manifold Structure (UMAP Projection)}
\label{app:umap}

Figure~\ref{fig:umap_manifolds} shows UMAP 2D projections of each entity
correlation manifold, visualizing the global geometric structure that persistent
homology quantifies. The shape of the projection
encodes the topological verdict. An arc or loop shape is the 2D signature of
$\beta_1 > 0$, a diffuse cloud indicates low topological structure, and a filament
indicates approximately tree-like topology. Color encodes TDA-derived cluster
assignment, showing whether the geometric clusters are interpretable as real-world
entity groupings.

\begin{figure}[t]
  \centering
  \includegraphics[width=\linewidth]{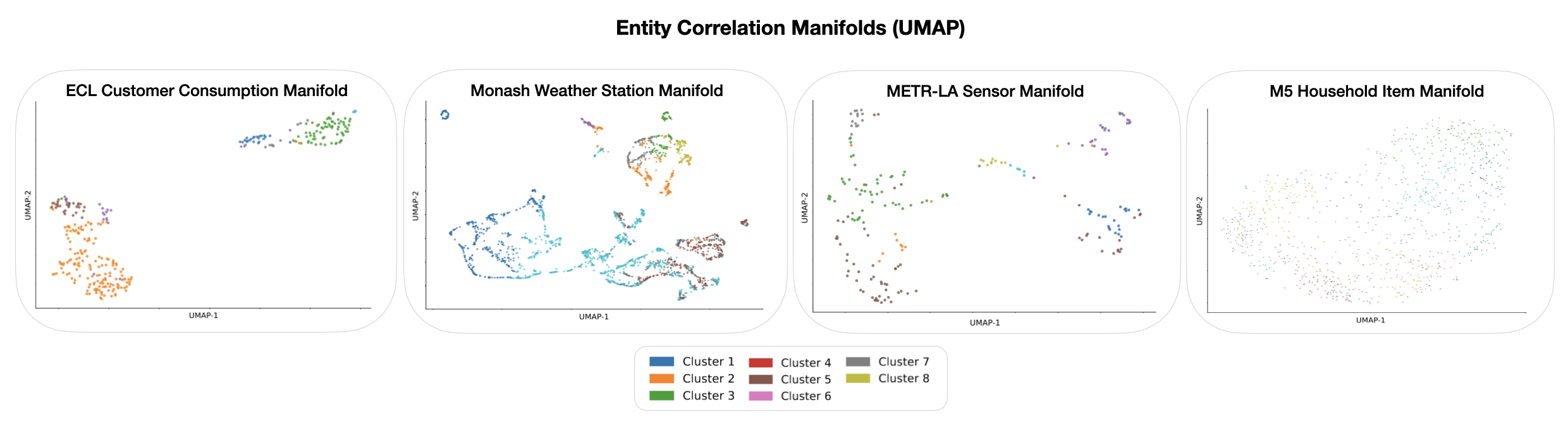}
  \caption{%
    \textbf{Entity correlation manifolds (UMAP) across four benchmarks.}
    Projection shape encodes topology: arc/loop structures indicate $H_1 > 0$
    (ECL, Weather); filamentary structure indicates near-tree topology with
    sparse local loops (METR-LA); diffuse cloud indicates low topological
    structure (M5 Household). Color encodes TDA-derived cluster assignment.
    M5 shows no visual clustering or geometric structure: a diffuse cloud
    with no arcs, loops, or filaments. This is consistent with its null
    pre-screening verdict (Table~\ref{tab:screening}), where calendar-dominated
    correlations produce no exploitable manifold geometry and no topology
    gain is expected or observed.
  }
  \label{fig:umap_manifolds}
\end{figure}

\subsection{Entity Cluster Profiles}
\label{app:cluster_profiles}

Figure~\ref{fig:cluster_profiles_all} shows the time-series profiles for each
TDA-derived cluster, confirming that the geometric groupings identified in the
UMAP projections correspond to interpretable real-world archetypes. For ECL and Weather, cluster
separation is sharp and semantically meaningful. For M5 and METR-LA, profiles are
more homogeneous, consistent with their calendar-dominated and near-tree manifold
verdicts respectively.

\begin{figure}[!ht]
  \centering
  \begin{subfigure}[t]{0.47\textwidth}
    \includegraphics[width=\linewidth]{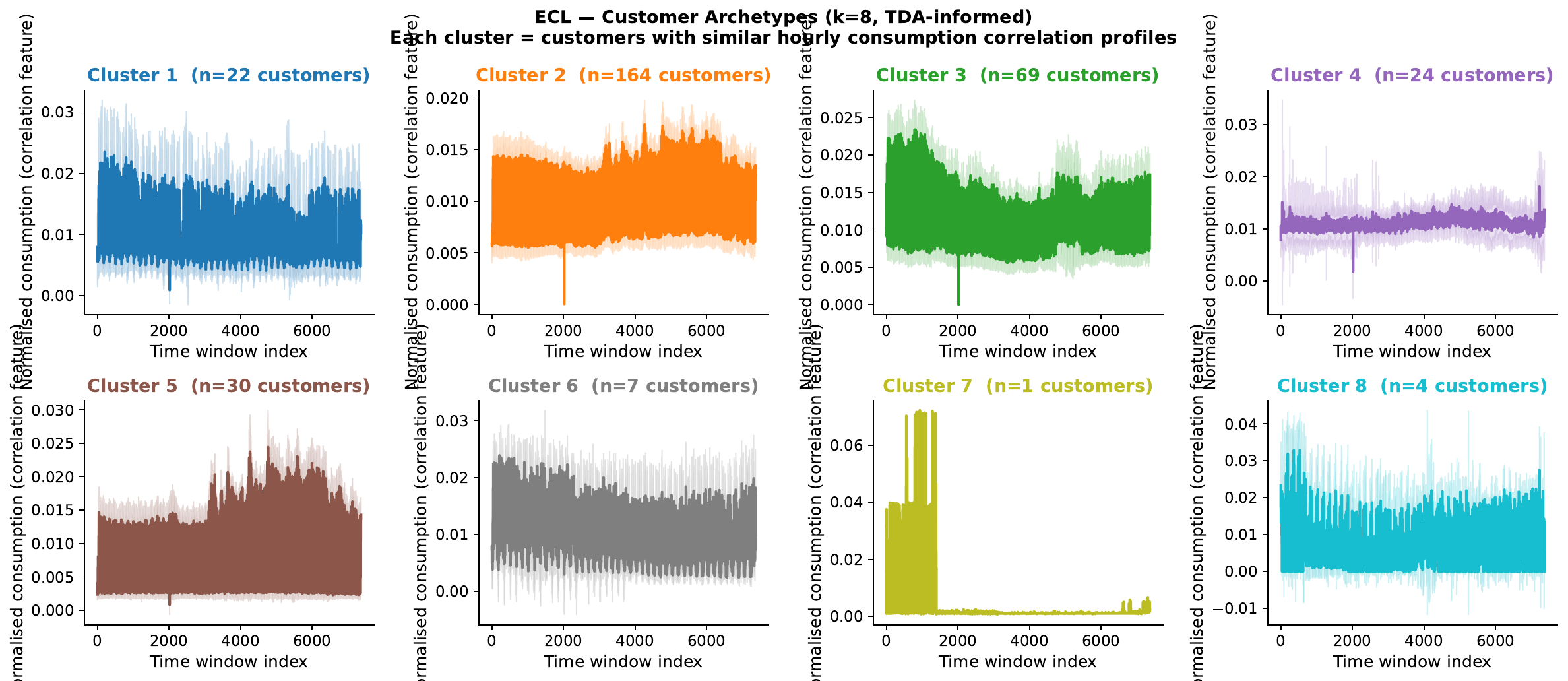}
    \caption{\textbf{ECL ($k{=}8$, TDA-informed).} Clusters correspond to distinct usage archetypes (high-daytime commercial, overnight industrial, flat residential, weekend-shifted), with Cluster~7 an outlier of extreme diurnal amplitude.}
    \label{fig:clusters_ecl}
  \end{subfigure}
  \hfill
  \begin{subfigure}[t]{0.47\textwidth}
    \includegraphics[width=\linewidth]{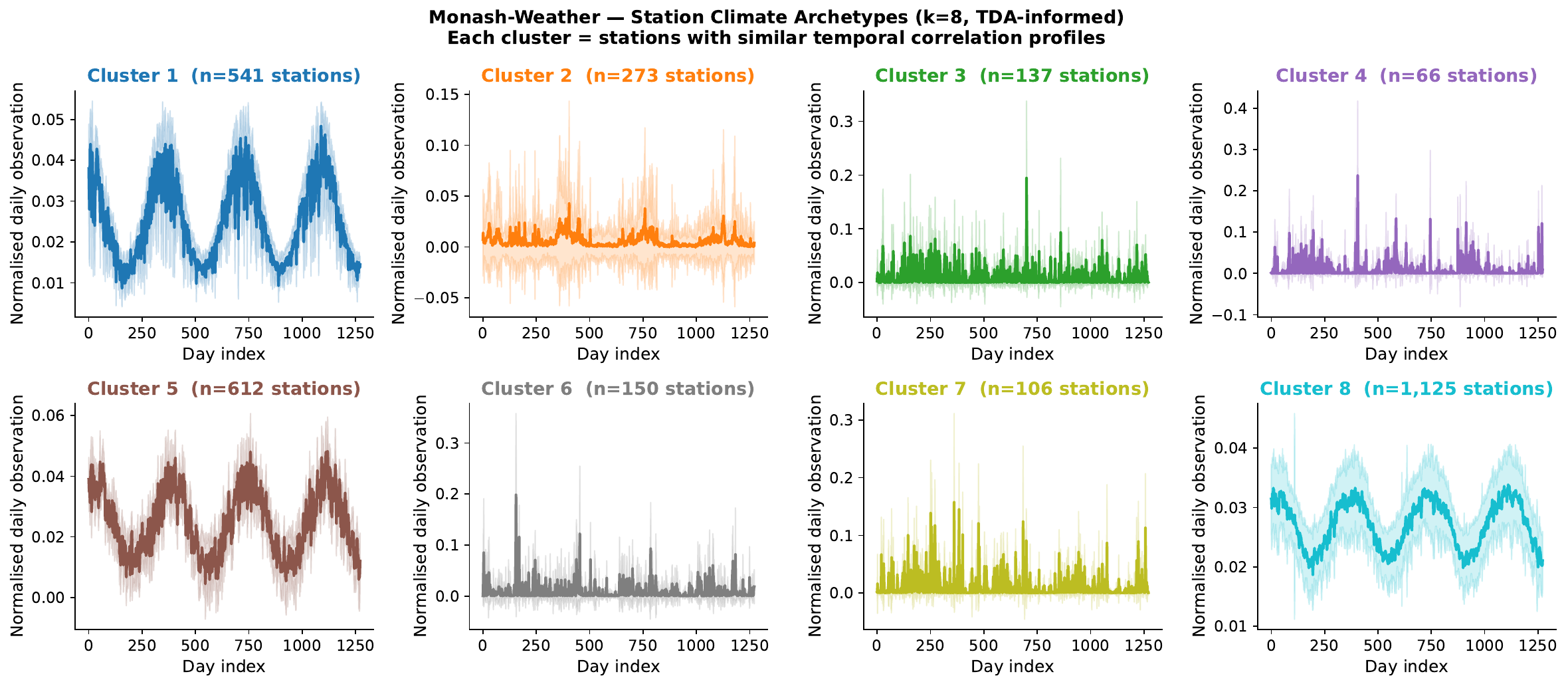}
    \caption{\textbf{Monash Weather.} Station clusters align with K\"{o}ppen climate classifications, confirming that TDA-derived groupings recover interpretable geographic climate structure.}
    \label{fig:clusters_weather}
  \end{subfigure}
  \\[3pt]
  \begin{subfigure}[t]{0.47\textwidth}
    \includegraphics[width=\linewidth]{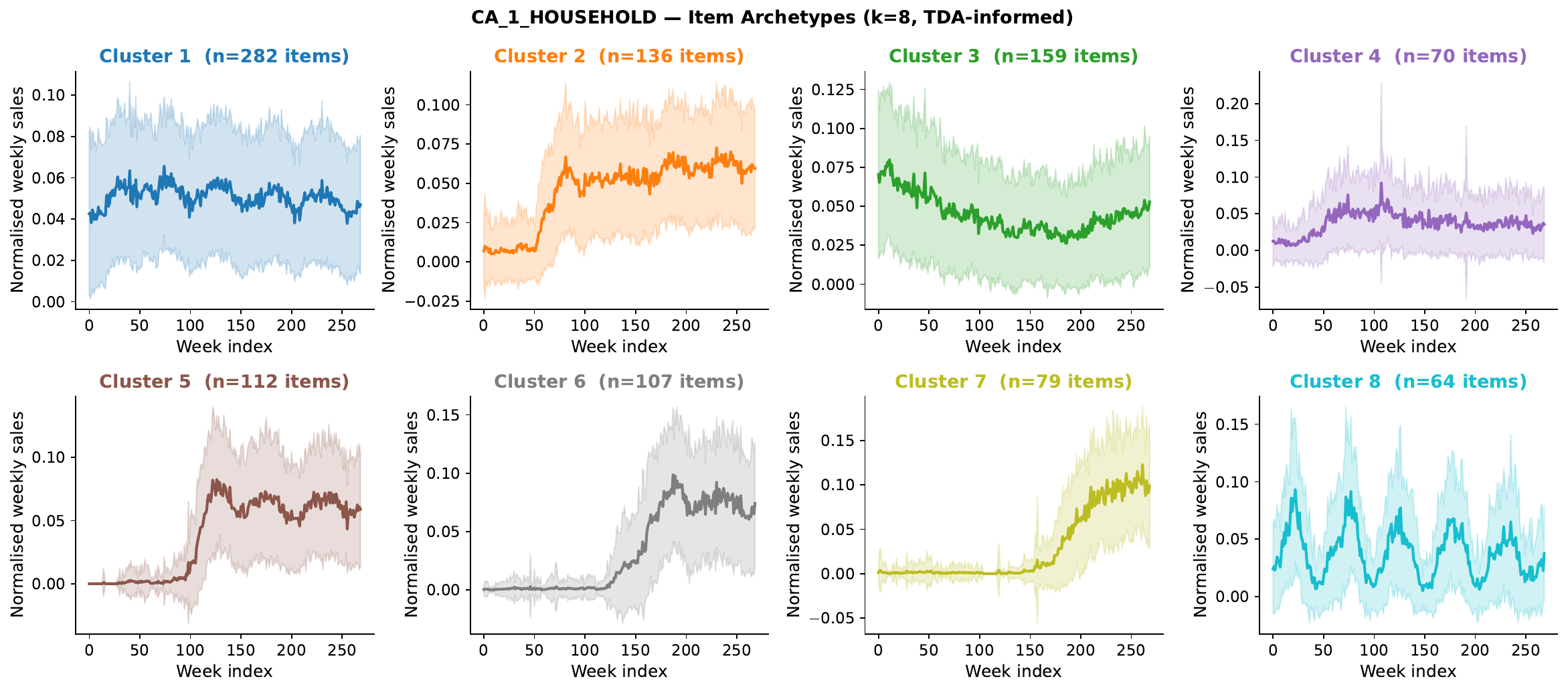}
    \caption{\textbf{M5 Household.} Item clusters correspond to demand archetypes (stable staple, seasonal, low-velocity, intermittent), with higher within-cluster variance than ECL or Weather consistent with weaker manifold structure.}
    \label{fig:clusters_m5}
  \end{subfigure}
  \hfill
  \begin{subfigure}[t]{0.47\textwidth}
    \includegraphics[width=\linewidth]{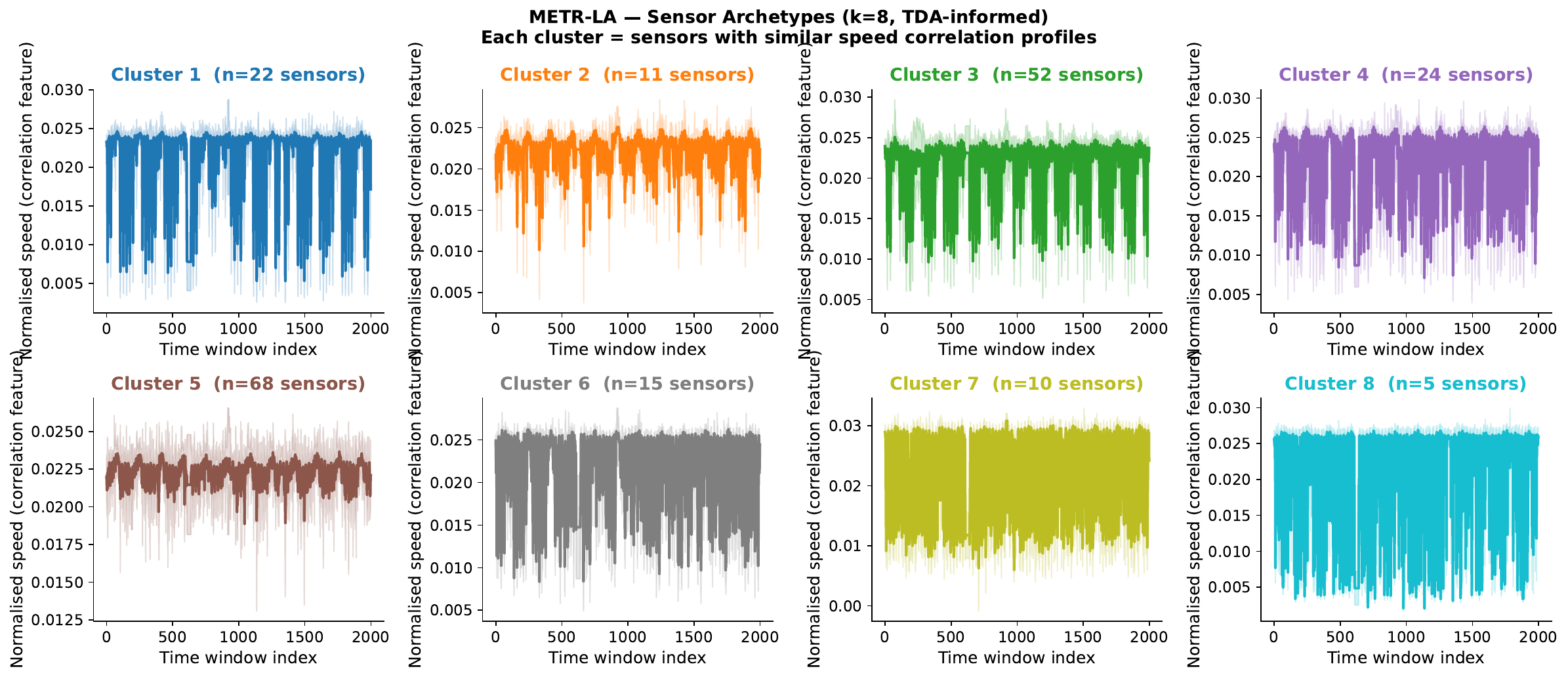}
    \caption{\textbf{METR-LA.} Sensor clusters correspond to highway archetypes (mainline freeway, interchange, ramp, arterial connector), with low within-cluster variance reflecting strong spatial regularity of freeway traffic patterns.}
    \label{fig:clusters_traffic}
  \end{subfigure}
  \caption{\textbf{Entity cluster profile visualizations across all four public
    benchmarks.} Each subfigure shows normalized time-series profiles per
    TDA-derived cluster (shaded band~$= \pm1\sigma$; solid line~= cluster mean).
    Cluster count $k$ is TDA-informed. ECL and Weather clusters are semantically
    sharp (usage/climate archetypes); M5 and METR-LA clusters are interpretable
    but less distinctive, consistent with their manifold verdicts.}
  \label{fig:cluster_profiles_all}
\end{figure}
\subsection{Mapper Graphs}
\label{app:mapper_graphs}

Figure~\ref{fig:mapper_all} shows Mapper graphs that summarize the global shape
of each dataset's population manifold, where nodes are clusters of entities with
similar time-series trajectories and edges connect clusters whose covers overlap.
$\beta_1$ (the first Betti number) counts the number of independent loops in the
manifold; nonzero $\beta_1$ indicates cyclic co-movement structure that TDA captures
as a diagnostic signal. Loop structures in the Mapper graph directly
corroborate nonzero $\beta_1$ from the persistence diagrams. Branching structures
reflect diverging sub-populations or regime transitions.

\begin{figure}[!ht]
  \centering
  \begin{subfigure}[t]{0.47\textwidth}
    \includegraphics[width=\linewidth]{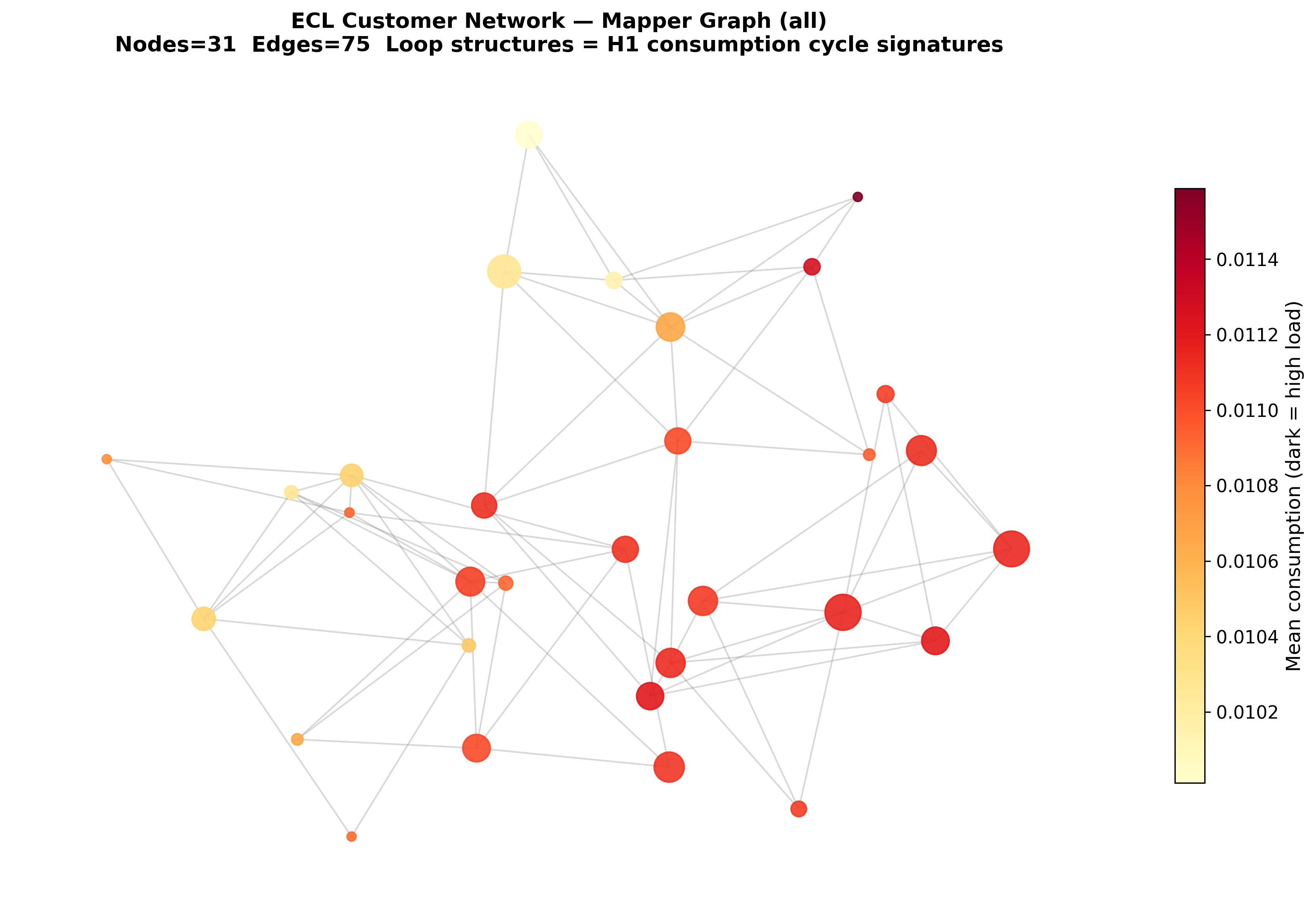}
    \caption{\textbf{ECL.} Pronounced loop structures link customers with similar but phase-shifted usage cycles, directly corroborating the nonzero $\beta_1$ from the persistence diagrams.}
    \label{fig:mapper_ecl}
  \end{subfigure}
  \hfill
  \begin{subfigure}[t]{0.47\textwidth}
    \includegraphics[width=\linewidth]{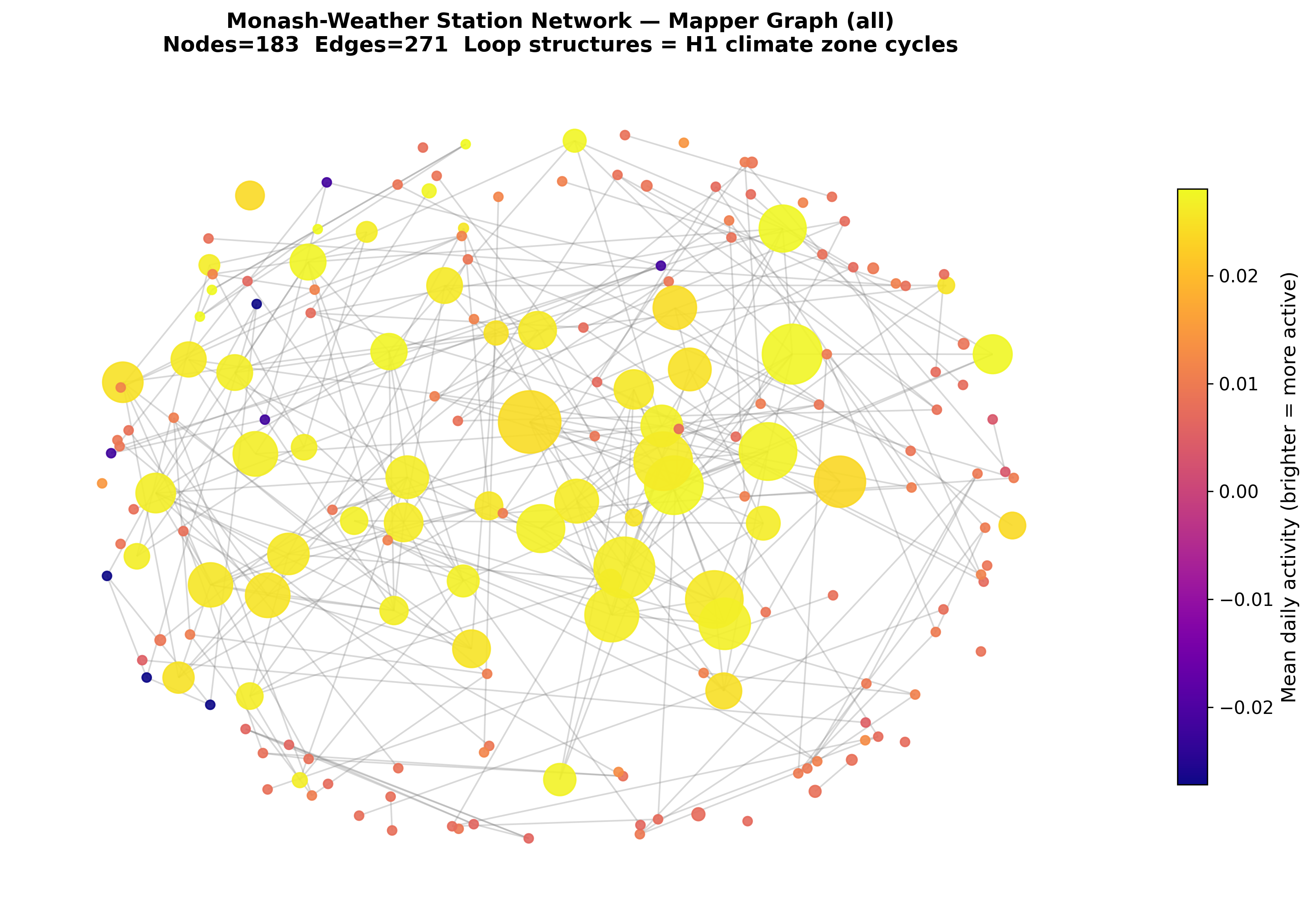}
    \caption{\textbf{Monash Weather.} Branching structures correspond to diverging climate subtypes and loop structures correspond to transitional regions connecting adjacent zones, consistent with the $\beta_1 > 0$ result.}
    \label{fig:mapper_weather}
  \end{subfigure}
  \\[3pt]
  \begin{subfigure}[t]{0.47\textwidth}
    \includegraphics[width=\linewidth]{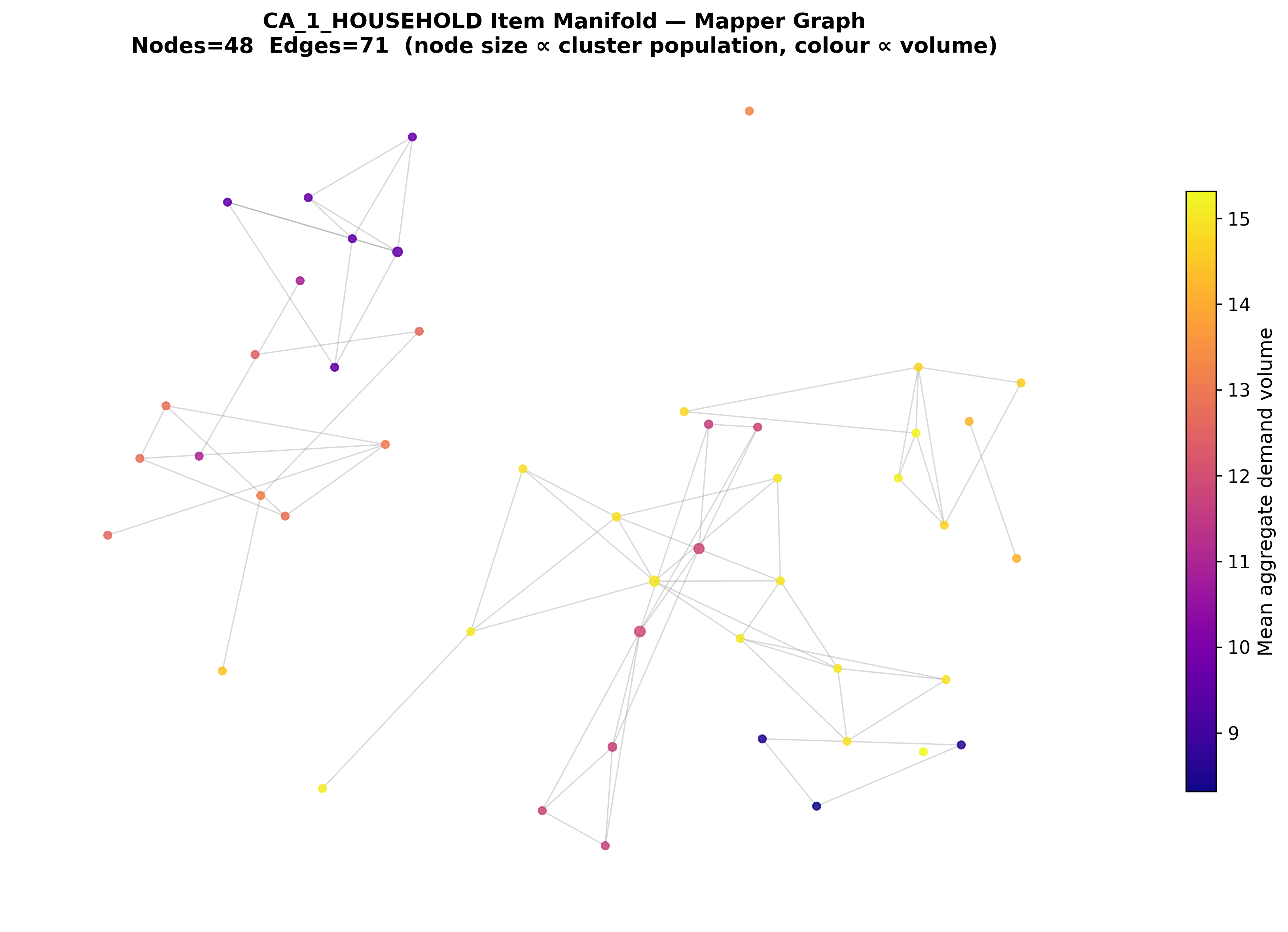}
    \caption{\textbf{M5 Household.} Loop structures in the graph identify product groups connected through shared demand periodicity (calendar co-movement), not genuine relational structure.}
    \label{fig:mapper_m5}
  \end{subfigure}
  \hfill
  \begin{subfigure}[t]{0.47\textwidth}
    \includegraphics[width=\linewidth]{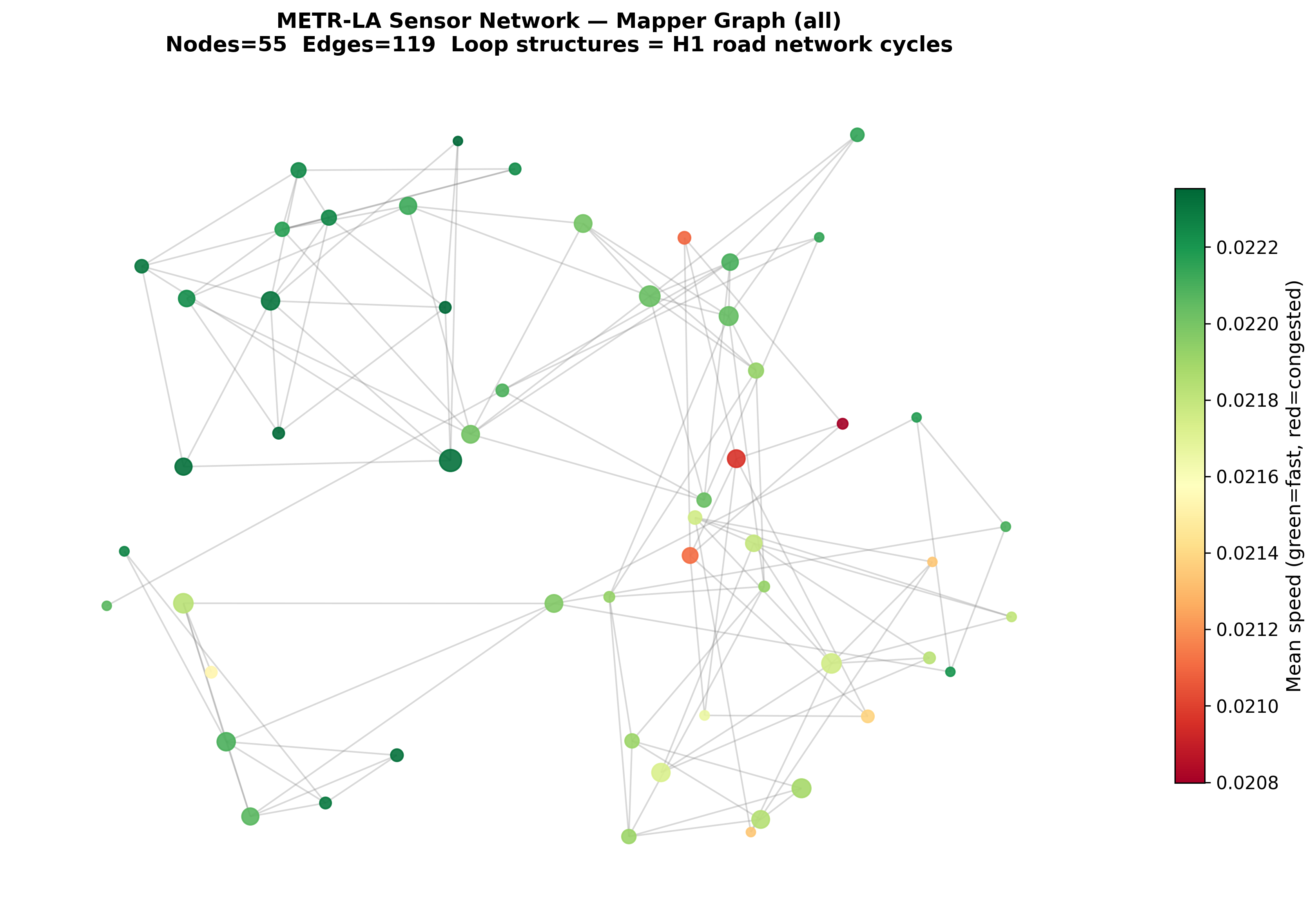}
    \caption{\textbf{METR-LA.} The graph topology mirrors the physical road network, with linear chains for highway segments and loops for interchanges and ring roads, directly validating the $\beta_1 > 0$ finding.}
    \label{fig:mapper_traffic}
  \end{subfigure}
  \caption{\textbf{Mapper graphs across all four public benchmarks.}
    Each node is a cluster of entities (customers, stations, items, or sensors)
    with similar trajectories; edges connect overlapping clusters. ECL and
    Weather show multi-loop structures consistent with high $\beta_1$; M5 loops
    are calendar-driven; METR-LA loops trace physical freeway interchanges.
  }
  \label{fig:mapper_all}
\end{figure}

\clearpage

\section{TDA Analysis: Internal Corpus}
\label{app:tda_internal}
\setlength{\floatsep}{4pt plus 1pt minus 1pt}
\setlength{\textfloatsep}{6pt plus 1pt minus 2pt}
\setlength{\abovecaptionskip}{4pt}
\setlength{\belowcaptionskip}{2pt}

This section characterizes the population manifold topology of the proprietary
corpus used in Sections~4.3--4.6, using the same analytical framework applied to
the public benchmarks in Appendix~\ref{app:homology}.
The corpus comprises five item categories (A-E) observed across four regions
(Regions~1--4), with each node-category pair treated as one population for TDA
fingerprinting.
The figures below use Category~A as the representative example throughout;
patterns for other categories are qualitatively consistent except where noted.

\subsection{Persistence Landscapes}
\label{app:tda_internal_landscapes}

Figure~\ref{fig:int_landscapes} overlays the persistence landscape curves for
$H_0$, $H_1$, and $H_2$ across all five categories.
$H_0$ activity is broadly similar across categories, reflecting comparable
cluster-merging dynamics at coarse scales.
The diagnostic signal resides in $H_1$. Categories~A and~D exhibit substantially
higher landscape amplitude than C and E, indicating denser recurring loop-like
structure across their item populations.
Category~C shows the lowest $H_1$ median persistence ($0.002$), suggesting a
more uniformly connected structural organization with fewer persistent cyclic
features.
The $H_2$ panel is active across all five categories, with the highest counts
in long-tail categories in Regions~1 and~3, where node and category fragmentation
creates persistent void structure that $H_2$ captures and that topology gains most
exploit.
These landscape signatures pass the pre-screening criterion of
Section~\ref{sec:screening}. The $H_1$ richness reflects genuine manifold structure, not
calendar artifact.

\begin{figure}[!ht]
  \centering
  \includegraphics[width=\linewidth]{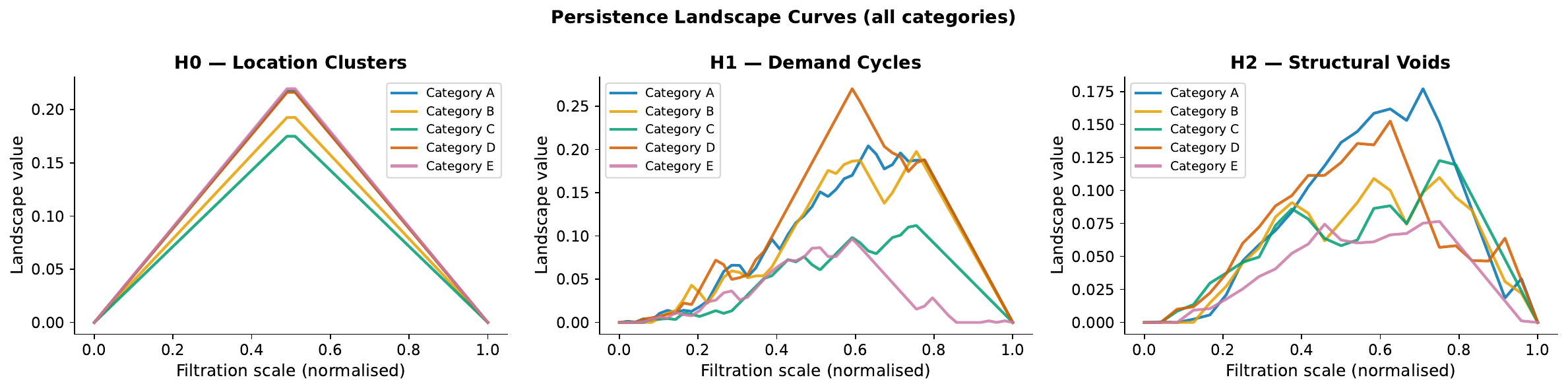}
  \caption{%
    \textbf{Persistence landscape curves $\lambda_1(t)$ for $H_0$, $H_1$, and $H_2$ across all five categories (internal corpus).}
    Categories~A and~D have substantially higher $H_1$ amplitude than C and E,
    indicating far more recurring loop-like structure in their item populations.%
  }
  \label{fig:int_landscapes}
\end{figure}

\subsection{Cross-Category Comparison}
\label{app:tda_internal_cross}

Figure~\ref{fig:int_cross_category} shows how TDA feature vectors differ across
categories, alongside $H_1$ Wasserstein-2 distance matrices quantifying
topological dissimilarity between them.
Categories~A and~D show meaningfully distinct topology from C and E, confirming
that the segments capture structurally different regimes.
Several category pairs exhibit pairwise Wasserstein distances of
$0.13$-$0.23$ across all regions, effectively sharing a single topological layer.
These cross-category coupling edges are reflected directly in the relational graph
($E_{\text{cross}}$).

\begin{figure}[!ht]
  \centering
  \includegraphics[width=\linewidth]{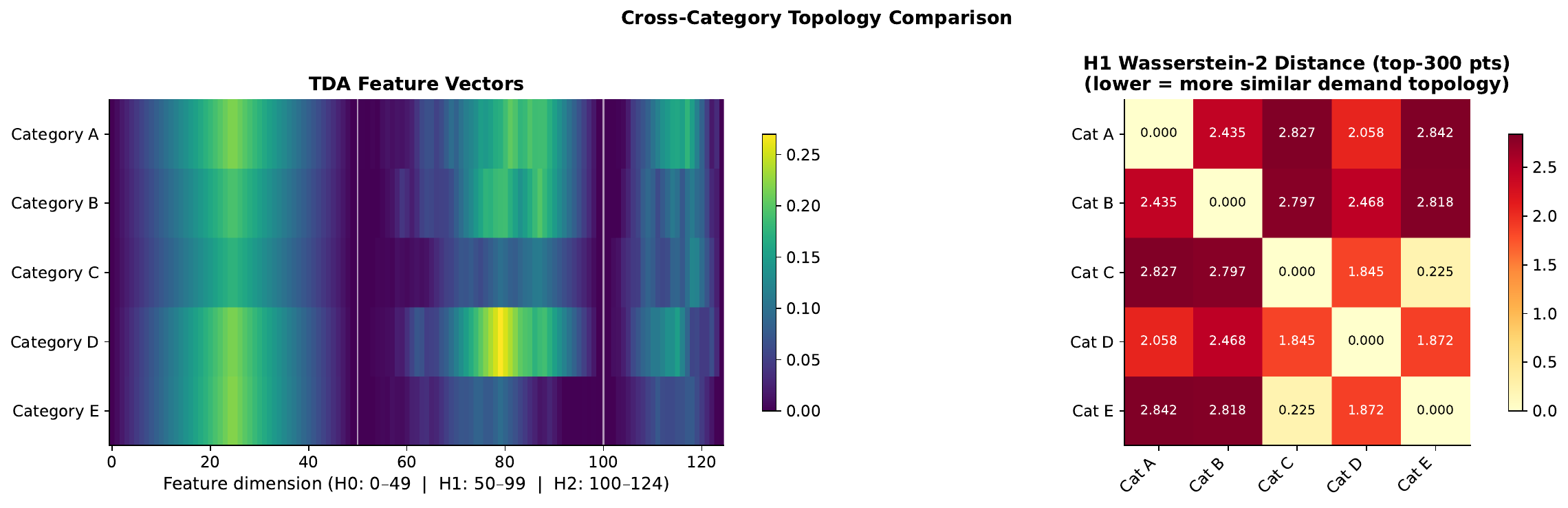}
  \caption{%
    \textbf{Cross-category TDA comparison (internal corpus).}
    Left panel: 125-dim TDA feature vectors per category ($H_0$: dims~0--49,
    $H_1$: dims~50--99, $H_2$: dims~100--124).
    Right panel: $H_1$ Wasserstein-2 pairwise distance matrix between categories
    (lower = more topologically similar).
    Categories~A and~D show structurally meaningful differences from C and E.%
  }
  \label{fig:int_cross_category}
\end{figure}
\subsection{Manifold Structure (UMAP Projection)}
\label{app:tda_internal_umap}

Figure~\ref{fig:int_umap} shows a UMAP 2D projection of Category~A item
trajectories, colored by TDA-derived cluster assignment.
The geometric coherence of cluster regions (compact, well-separated islands
rather than diffuse blobs) validates that the 125-dim persistence landscape
fingerprints capture genuine structural distinctions.
Items in peripheral UMAP regions are structural outliers whose isolation would
be collapsed into the majority by any purely volume-weighted representation.
This arc-and-island structure is the visual signature of $\beta_1 > 0$ and
confirms that Category~A passes the pre-screening criterion (Section~\ref{sec:screening}).

\begin{figure}[!ht]
  \centering
  \includegraphics[width=\linewidth]{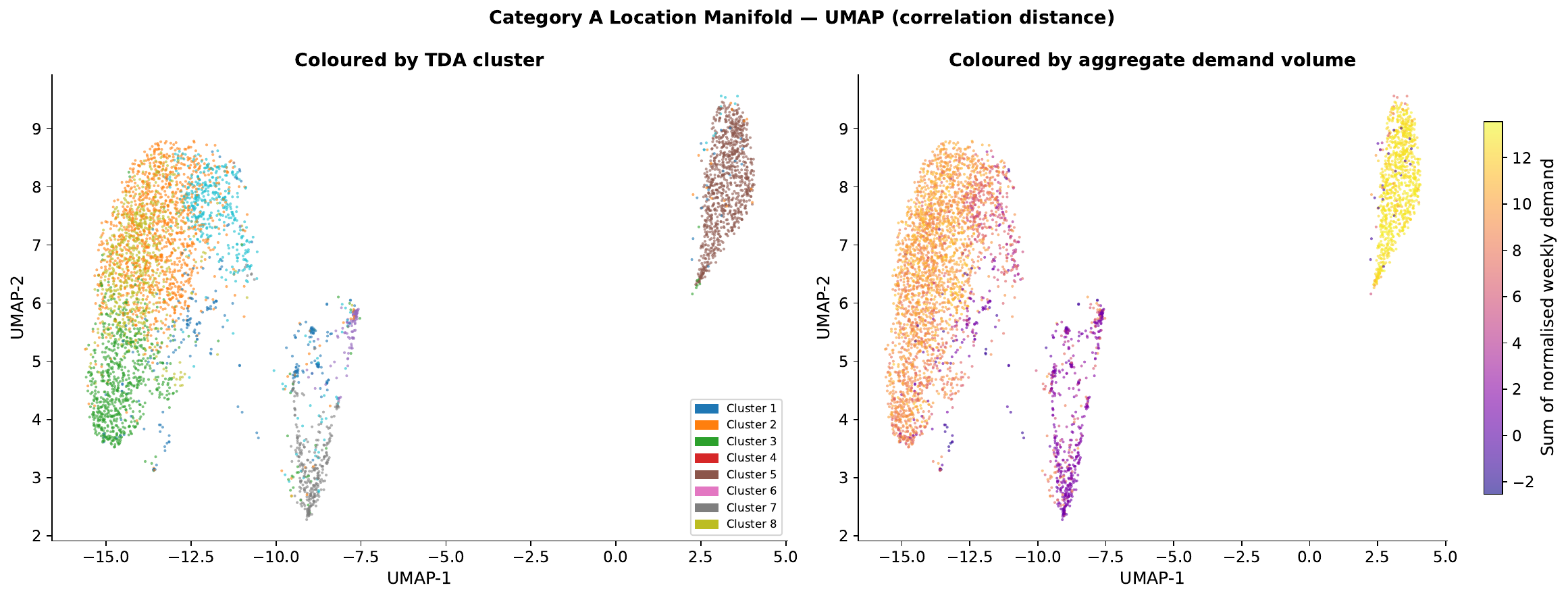}
  \caption{%
    \textbf{UMAP projection of Category~A item trajectories (internal corpus),
    colored by TDA-derived cluster.}
    Compact, well-separated cluster regions confirm that the persistence
    landscape fingerprints encode genuine structural distinctions.
    Peripheral items are structural outliers invisible to volume-weighted
    representations.%
  }
  \label{fig:int_umap}
\end{figure}

\subsection{Entity Cluster Profiles}
\label{app:tda_internal_clusters}

Figure~\ref{fig:int_clusters} shows item cluster assignments for Category~A
derived from K-means ($k{=}8$, TDA-informed) applied to the 125-dim TDA fingerprints.
Eight structurally distinct archetypes emerge from topology alone.
Two items can share nearly identical mean demand yet occupy different structural
clusters, representing fundamentally different temporal trajectory shapes that
volume-based assignment would collapse into the same group.
The cluster structure directly informs cold-start initialization: a newly introduced item inherits its topology signals from its nearest manifold neighbor within the same cluster, rather than from a volume-rank proxy.

\begin{figure}[!ht]
  \centering
  \includegraphics[width=\linewidth]{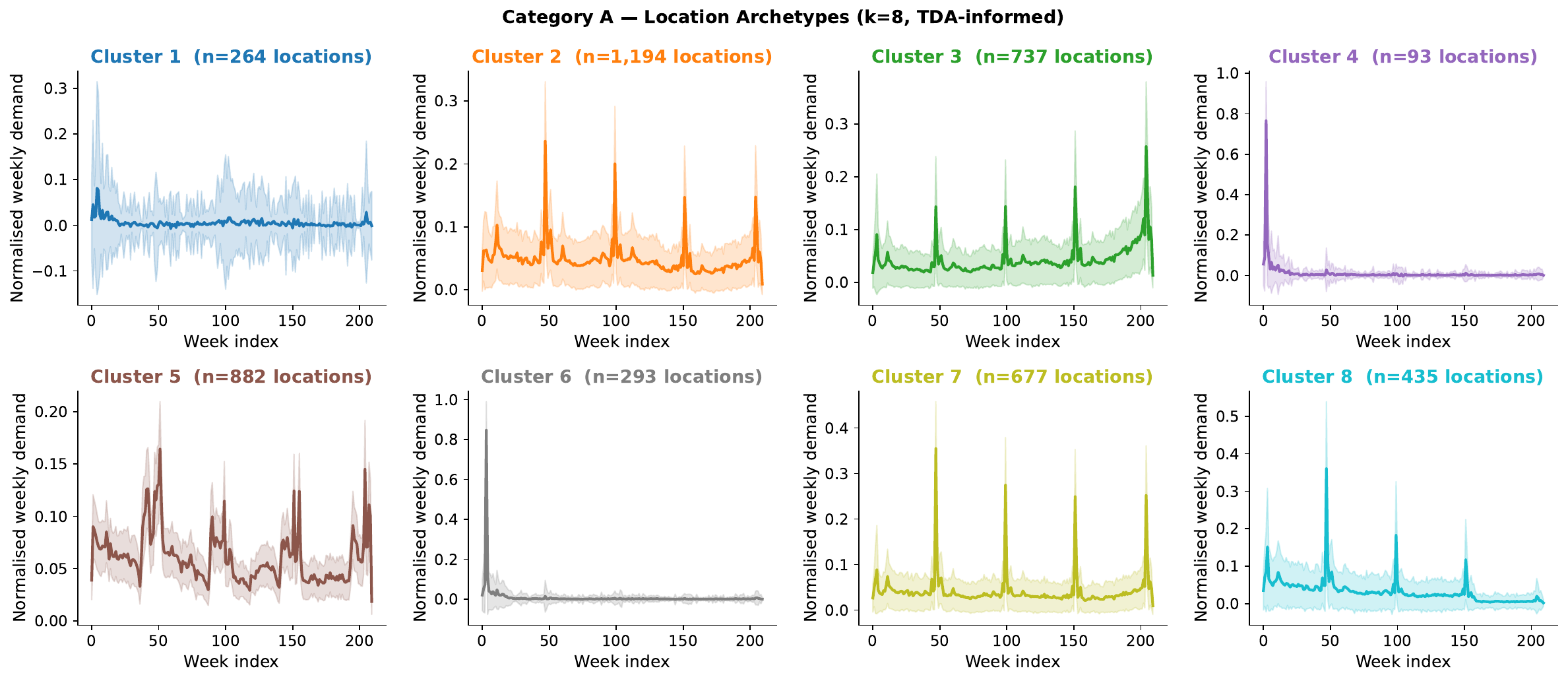}
  \caption{%
    \textbf{Item cluster assignments for Category~A (internal corpus),
    K-means $k{=}8$ (TDA-informed) on 125-dim TDA persistence landscape fingerprints.}
    Eight structurally distinct archetypes emerge from topology alone.
    Two items with nearly identical mean demand can occupy different clusters,
    representing fundamentally different forecast behaviors that volume-based
    assignment would collapse.%
  }
  \label{fig:int_clusters}
\end{figure}

\subsection{Mapper Graph}
\label{app:tda_internal_mapper}

Figure~\ref{fig:int_mapper} shows the Mapper graph for Category~A, computed
with a 2D UMAP lens, $12{\times}12$ cover with 40\% overlap, and DBSCAN
clustering within bins.
The graph has 55 nodes and reveals a hub-and-spoke macro-topology. There is a dense core
of structurally typical items and several peripheral branches of structural
outliers.
Loop structures in the graph directly corroborate the nonzero $\beta_1$ from the
persistence diagrams.
This macro-structure guides Layer~2 relational graph construction. Hub items
generalize well to each other and are strongly connected, while peripheral-branch
items connect only within their branch.

\begin{figure}[!ht]
  \centering
  \includegraphics[width=\linewidth]{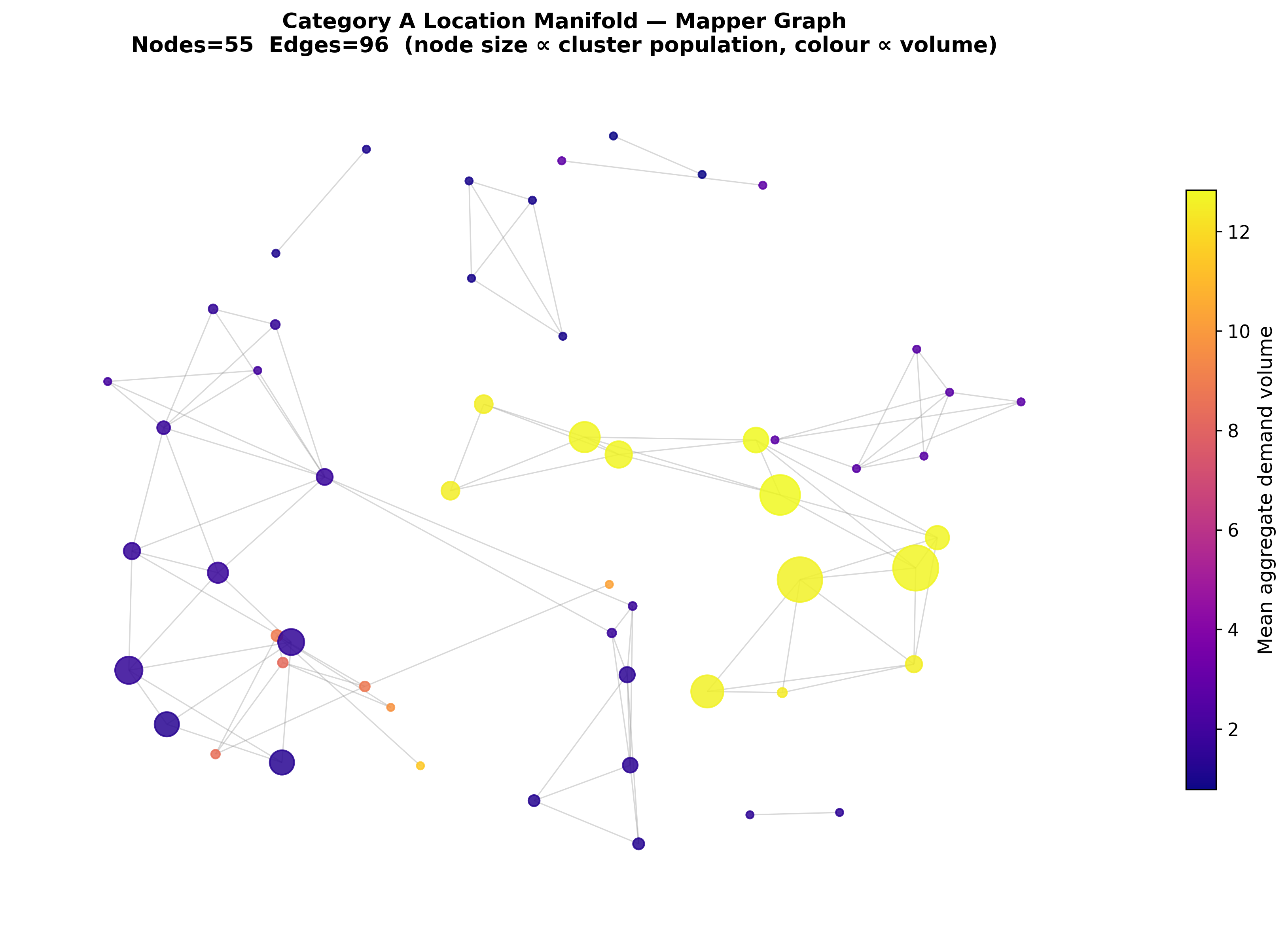}
  \caption{%
    \textbf{Mapper graph for Category~A (internal corpus).}
    55~nodes; node size $\propto$ item count; edge width $\propto$ inter-cluster
    affinity.
    The hub-and-spoke topology reveals a dense core of structurally typical items
    and several peripheral branches of structural outliers, directly corroborating
    the nonzero $\beta_1$ (first Betti number: count of independent loops in the manifold) from the persistence diagrams.%
  }
  \label{fig:int_mapper}
\end{figure}

\clearpage

\section{Randomized Control Ablations}
\label{app:ablations}

To verify that gains reflect genuine manifold structure rather than the effect of
adding any non-zero context, we evaluate two controls. \textsc{rand\_TDA} replaces
the TDA vector with Gaussian noise of matched dimension. \textsc{shuffle\_TDA}
uses real TDA vectors permuted randomly across series, preserving marginal
statistics while breaking the series-to-topology correspondence.
Table~\ref{tab:ablation-controls} summarizes results on the two public benchmarks
where both controls were evaluated. These two benchmarks represent opposite ends
of the topology spectrum: Monash Weather has the highest $\beta_1$ count among
public benchmarks ($H_1 = 1{,}847$), making topology-driven gains most
pronounced, while METR-LA has the sparsest topology (near-tree structure,
$H_1 \approx 0$), providing a stringent null test where genuine topology gains
are expected to be small.

\begin{table}[h]
\centering\small
\caption{\textbf{Randomized control ablations.}
  Monash Weather: MAE at $H{=}30$; METR-LA: MAE at 30\,min (mid-range horizon where control separation is most stable).
  $\Delta$ relative to Fixed~[no topo]. $\downarrow$ lower is better.}
\label{tab:ablation-controls}
\setlength{\tabcolsep}{6pt}
\begin{tabular}{l r@{~}l r@{~}l r@{~}l}
\toprule
\textbf{Model}
  & \multicolumn{2}{c}{\textbf{Monash Weather MAE}}
  & \multicolumn{2}{c}{\textbf{METR-LA MAE}} \\
\midrule
Transformer
  & 2.175 &                                                          & 1.540 & \\
Transformer + rand-TDA \textit{(control)}
  & 2.182 & \small{$(+0.007)$}         & 1.574 & \small{$(+0.034)$} \\
Transformer + shuffle-TDA \textit{(control)}
  & 2.199 & \small{$(+0.024)$}                  & 1.535 & \small{$(-0.005)$} \\
Transformer + TDA
  & 2.170 & \small{$(-0.005)$}                   & 1.540 & \small{$(0.000)$} \\
Transformer $+$ TDA $+$ Sheaf
  & \textbf{2.004} & \small{$(-0.171)$}             & \textbf{1.521} & \small{$(-0.019)$} \\
\bottomrule
\end{tabular}
\smallskip\\
\end{table}

On Monash Weather, the topology-richest benchmark ($1{,}847$ $H_1$ generators),
random injection actively regresses $(+0.007)$ while real TDA improves $(-0.005)$
and Sheaf achieves the largest gain $(-0.171)$. Shuffle performance falls
between rand and real on both datasets, confirming that both the content of the
TDA vector \emph{and} its correct series-to-topology assignment contribute to
observed improvements.

\section{Spectral vs.\ Neural Sheaf Encoder}
\label{app:spectral_vs_neural}

As described in Section~\ref{sec:sheaf}, we evaluate two implementations of the
sheaf component.
\textbf{Sheaf (Spectral)} computes per-series coordinates as the leading left
singular vectors of the entity-time training matrix, block-wise per
store$\times$category for M5 and globally for ECL, Monash Weather, and METR-LA.
\textbf{Sheaf (Neural)} trains per-series learnable embeddings $E_i \in
\mathbb{R}^{256}$ using a coboundary consistency loss on a $k$-NN correlation
graph, warm-started from the spectral coordinates (Section~\ref{sec:sheaf}).
The objective is
  \[
    \mathcal{L}_{\text{sheaf}} =
    \lambda_c \!\!\sum_{(i,j)\in\mathcal{E}}\!\!
  w_{ij}\,\bigl\|R(E_i)-R(E_j)\bigr\|^2
    \;+\;
    \lambda_r \,\bigl\|\mathrm{dec}(E) -
  \mathbf{x}_{\text{node}}\bigr\|^2
    \;-\; \beta\,\operatorname{Var}_n(R(E_n)),
  \]
  where $R$ is a shared linear restriction map, the first term
  is the sheaf
  coboundary loss, the second is a reconstruction regularizer,
  and the third is a
  spread penalty that prevents trivial collapse to the zero
  section.

Table~\ref{tab:spectral_vs_neural} reports the comparison on M5, the only dataset
where both variants were fully evaluated across all backbone families.
The spectral encoder consistently matches or outperforms the neural variant.
The most informative comparison is Chronos 2.0 on Household, where individual
item demand is most heterogeneous. There, spectral coordinates achieve MAE~1.0251 versus
1.0343 for the neural encoder, a 0.9\% relative gap.
Across the full 28{,}860-series evaluation the ordering is
Vanilla~(0.7717)~$<$~Spectral~(0.7742)~$<$~Neural~(0.7805),
reflecting that M5 is a null-TDA dataset and neither sheaf variant can overcome the
absent global topology signal. However, the spectral variant consistently degrades
less than the neural variant from the vanilla baseline.

The neural encoder degrades relative to spectral for three compounding reasons.
First, warm-starting from the spectral coordinates and then applying gradient
descent toward coboundary consistency moves embeddings away from the spectral
position toward graph-agreement, which is less useful for downstream forecasting.
Second, the coboundary loss and reconstruction loss work in opposition. Pushing
two correlated items toward agreement in restriction-map space reduces their
  individual reconstruction quality.
Third, the spectral encoder extracts the dominant structural axes of
  entity co-variation via closed-form truncated SVD, requiring no
  training. The neural encoder approximates
a more expressive but empirically inferior objective.
We therefore adopt spectral coordinates as the default for all reported results.

\begin{table}[h]
\centering
\caption{%
  \textbf{Spectral vs.\ neural sheaf encoder on M5 (Walmart, 28{,}860 active series).}
  MAE and WAPE per M5 category. \textbf{Bold} = best per section. $\downarrow$ lower is better.
  Household has the most heterogeneous item demand and the most informative comparison point for the two sheaf
  variants.
  All rows use a 52-week context window and 4-week forecast horizon.%
}
\label{tab:spectral_vs_neural}
\small\setlength{\tabcolsep}{6pt}
\begin{tabular}{lcccccc}
\toprule
 & \multicolumn{2}{c}{\textbf{Foods}} & \multicolumn{2}{c}{\textbf{Hobbies}} & \multicolumn{2}{c}{\textbf{Household}} \\
\cmidrule(lr){2-3}\cmidrule(lr){4-5}\cmidrule(lr){6-7}
\textbf{Model}
  & \textbf{MAE$\downarrow$} & \textbf{WAPE$\downarrow$}
  & \textbf{MAE$\downarrow$} & \textbf{WAPE$\downarrow$}
  & \textbf{MAE$\downarrow$} & \textbf{WAPE$\downarrow$} \\
\midrule
\multicolumn{7}{l}{\textit{Chronos 2.0 variants}} \\
Chronos Vanilla Adapter      & 0.984 & 1.408 & 1.054 & 1.788 & 1.040 & 1.643 \\
Chronos $+$ TDA              & 0.982 & 1.405 & 1.052 & 1.786 & 1.039 & 1.641 \\
Chronos $+$ TDA $+$ Sheaf    & \textbf{0.966} & \textbf{1.383} & \textbf{1.038} & \textbf{1.761} & \textbf{1.025} & \textbf{1.618} \\
Chronos $+$ TDA $+$ Sheaf (Neural)   & 0.981 & 1.403 & 1.043 & 1.770 & 1.034 & 1.633 \\
\bottomrule
\end{tabular}
\end{table}

Spectral coordinates dominate the neural encoder on every evaluated configuration.
The gap is largest where item-level heterogeneity is highest (Household) and
smallest where the overall manifold signal is weakest (full M5 evaluation, a
null-TDA domain). These results support adopting the spectral sheaf encoder as the
appropriate default for TopoPrimer across domains.

\paragraph{Why M5 is the right comparison domain.}
M5 is a null-TDA domain. Its $H_1$ generators reflect shared seasonal periodicity
rather than genuine relational structure, and the pre-screening verdict is null
(Table~\ref{tab:screening}). This makes it the cleanest isolated test of the
sheaf encoder itself. On ECL and Monash Weather, both TDA and Sheaf are active simultaneously; any difference between sheaf variants is confounded by the concurrent TDA contribution, making it impossible to isolate the sheaf's effect alone. On M5, TDA contributes nothing, so the sheaf coordinate must stand alone and any quality difference between SVD and the neural encoder is fully unmasked. Chronos~2.0 is the most informative backbone here: unlike TimesFM, which was pre-trained on M5, Chronos had no M5 exposure, so the sheaf coordinate carries genuine signal and the spectral advantage is cleanly observable (Household MAE 1.0251 vs.\ 1.0343). The fully-trained Transformer, which uses spectral coordinates throughout, is not included as a separate row.

\paragraph{Computational overhead.}
The performance advantage of spectral coordinates is reinforced by a decisive
cost asymmetry. The spectral encoder requires a single truncated SVD of the
entity-time training matrix, block-wise per store$\times$category group.
For M5 at 30{,}490 series this completes in under 90 seconds on a single CPU
core and requires no hyperparameter selection.
The neural encoder requires a full pre-training run. It needs multiple epochs of gradient
descent on a coboundary consistency loss with three hyperparameters
($\lambda_c$, $\lambda_r$, $\beta$), early stopping on a validation criterion,
and warm-start initialization from the spectral coordinates themselves.
TimesFM was not evaluated with the neural encoder; given
  that spectral coordinates outperform neural on Chronos~2.0 on the
  same domain, the additional pre-training cost is not warranted.
When performance is equivalent or spectral coordinates are superior, the
zero-training-cost encoder is the unambiguous choice.



\newpage
\section{Topology Signal Survives Fine-Tuning}
\label{app:ft_robustness}

A natural objection to topology augmentation for adapted models is that
fine-tuning on in-domain data should subsume any topological signal.
The result is inconsistent with that hypothesis.

\paragraph{Why open benchmarks do not suffice.}
A meaningful test requires two conditions. First, the fine-tuning pass must give the
backbone access to \emph{relational} domain structure, so gradient descent has a
mechanism to subsume cross-series topology. Second, the dataset must be
topology-rich enough that the topology gain within the Fine-Tuned family is measurable above noise.

ECL satisfies the topology-richness condition with $83$ $H_1$ cycles from shared grid infrastructure but
fails the relational-structure condition. The 321 clients are anonymous meters with no entity graph and no
relational labels. Fine-tuning on ECL produces a checkpoint that differs from the
zero-shot one in distributional calibration, not in cross-series relational signal.
Persistent homology captures structure that the fine-tuning objective never
targeted, so comparing fine-tuned adapters with and without topology does not test whether
fine-tuning subsumes topology. The backbone never had a mechanism to do so.

Monash Weather fails for a complementary reason. Its manifold is topology-rich
($1{,}847$ $H_1$ generators), but Chronos was pre-trained on Monash. The Zero-Shot-vs.-Fine-Tuned
gap is negligible, making the within-Fine-Tuned comparison vacuous for the primary backbone.
For any other backbone, the entity-graph-free issue from ECL resurfaces.
METR-LA and M5 have fine-tunable structures but near-null topology signal.
We therefore evaluate on the internal corpus from
Section~\ref{sec:three_regimes}, which provides both conditions simultaneously.

\paragraph{Setup.}
We evaluate two Chronos~2.0 adapter families on the internal corpus. The fine-tuning robustness question requires a pre-trained foundation model with a meaningful zero-shot baseline; the fully-trained Transformer does not have one, so the Transformer family is used for the seasonal-spike and cold-start evaluations instead.
\textbf{Chronos Zero-Shot} trains an adapter head on the frozen zero-shot Chronos
checkpoint. \textbf{Chronos Fine-Tuned} trains an adapter head on a Chronos checkpoint
fine-tuned on domain data. Within each family we evaluate three adapter
configurations: \textbf{vanilla} (no topology), \textbf{$+$TDA$_E$} (125-dim entity-manifold
persistence landscape), and \textbf{$+$TDA$_E$$+$TDA$_I$} (entity-manifold plus the additional 125-dim item-manifold fingerprint). Sheaf coordinates are omitted here because the fine-tuning robustness question targets topology fingerprints specifically, which are the components most plausibly subsumed by gradient descent on in-domain data; sheaf results on these backbone families appear in the main results (Table~\ref{tab:toposheaf_public}).
Table~\ref{tab:ft_robustness} reports MAE and WAPE on a single-category
slice ($N=50{,}920$ series) with per-family $\Delta$ values.

\paragraph{Backbone fine-tuning provides marginal gain over the vanilla adapter.}
On the single-category slice, Chronos Zero-Shot achieves MAE~$1.168$ and
Chronos Fine-Tuned achieves MAE~$1.142$, a marginal improvement of $-0.026$ MAE from fine-tuning the backbone. This is consistent with the general
observation that foundation model adapter performance on sparse discontinuous
demand is difficult to improve via standard fine-tuning alone.

\paragraph{Topology gain is preserved after fine-tuning.}
On the single-category slice (Table~\ref{tab:ft_robustness}), the topology gain within the Zero-Shot family
is $\Delta\mathrm{MAE} = -0.022$ ($\Delta$WAPE~$-0.016$) and
the topology gain within the Fine-Tuned family is $\Delta\mathrm{MAE} = -0.024$
($\Delta$WAPE~$-0.017$). These deltas are essentially
identical across backbone conditions that differ substantially in domain
adaptation, consistent with the hypothesis that fine-tuning and topology
augmentation capture largely orthogonal signals.
Chronos Fine-Tuned $+$ TDA$_E$$+$TDA$_I$ achieves the best result across both families
(MAE~$1.118$, WAPE~$0.861$). The combined
gain from fine-tuned initialization and topology over Chronos Zero-Shot
is $-0.055$ WAPE, and the two contributions are additive.
One nuance is visible in Table~\ref{tab:ft_robustness}: on the fine-tuned backbone,
TDA$_E$ alone yields only a marginal MAE improvement ($-0.005$) and a slight WAPE
regression ($+0.009$) relative to Chronos Fine-Tuned vanilla.
The node-manifold fingerprint provides insufficient positional context once the backbone
has already been exposed to domain structure via fine-tuning; the item-manifold
fingerprint TDA$_I$ is required to recover and exceed the vanilla baseline.
The topology gain within the Fine-Tuned family of $\Delta = -0.028$ WAPE reported above refers specifically
to Chronos Fine-Tuned $+$ TDA$_E$$+$TDA$_I$, not to TDA$_E$ in isolation.

\paragraph{Implication.}
These results suggest that fine-tuning and topology augmentation address complementary aspects of the
forecasting problem, with largely additive benefits. Fine-tuning improves
backbone calibration to the domain distribution; topology supplies
cross-series structural information that the univariate fine-tuning objective
has no mechanism to recover. This motivates TopoPrimer as a persistent,
complementary input representation compatible with any level of backbone
adaptation.

\begin{table}[h]
\centering
\caption{%
  \textbf{Chronos Zero-Shot vs.\ Chronos Fine-Tuned adapter families} on a single-category internal
  corpus slice ($N=50{,}920$ series). $\Delta$ computed relative to vanilla
  adapter within each backbone family. Gain magnitude is near-identical across
  both backbone conditions despite substantial differences in domain adaptation.
  \textbf{Bold} = best per section. $\downarrow$ lower is better.
}
\label{tab:ft_robustness}
\begin{tabular}{lcccc}
\toprule
\textbf{Model} & \textbf{MAE$\downarrow$} & \textbf{WAPE$\downarrow$}
  & \textbf{$\Delta$MAE} & \textbf{$\Delta$WAPE} \\
\midrule
\multicolumn{5}{l}{\textit{Chronos 2.0 Zero-Shot variants}} \\
Chronos Zero-Shot                    & 1.168 & 0.849 & --          & --          \\
Chronos Zero-Shot $+$ TDA$_E$        & 1.146 & 0.833 & \dneg{0.022} & \dneg{0.016} \\
\midrule
\multicolumn{5}{l}{\textit{Chronos 2.0 Fine-Tuned variants}} \\
Chronos Fine-Tuned                   & 1.142 & 0.878 & --          & --          \\
Chronos Fine-Tuned $+$ TDA$_E$       & 1.137 & 0.887 & \dneg{0.005} & $+0.009$ \\
Chronos Fine-Tuned $+$ TDA$_E$$+$TDA$_I$ & \best{1.118} & \best{0.861} & \dneg{0.024} & \dneg{0.017} \\
\bottomrule
\end{tabular}
\end{table}

\section{Quantile Calibration on the Internal Corpus}
\label{app:calibration}

The Transformer backbone outputs 9 calibrated quantiles (0.02, 0.1, 0.2, 0.3, 0.5, 0.7, 0.8, 0.9, 0.98) via a Huber quantile loss. Table~\ref{tab:qloss} reports average pinball loss (QLoss) across all quantiles on the internal corpus, alongside the corresponding MAE.

On the internal corpus, where $H_1/N$ is non-negligible and the TDA fingerprint carries genuine manifold signal (unlike the public benchmarks, where TDA alone was consistently flat or slightly negative), both topology components improve substantially over the vanilla baseline. A meaningful dissociation between the two emerges. \textbf{Transformer~$+$~TDA$_E$$+$TDA$_I$ achieves the best MAE~(0.596)}: the combined TDA fingerprints inject population-level structural signal that lowers the median forecast. \textbf{Transformer~$+$~Sheaf achieves the best QLoss~(0.1675)}: the sheaf coordinate smooths the full predictive distribution and improves tail coverage even when it does not push the median lower. On the internal corpus, TDA$_E$$+$TDA$_I$ improves point accuracy and sheaf improves calibration; the two contributions are complementary. This is consistent with the orthogonal-signal interpretation in Section~\ref{sec:three_regimes}.

\begin{table}[h]
\centering
\caption{%
  \textbf{Quantile calibration on the internal corpus (Transformer backbone).}
  QLoss = average pinball loss across 9 quantiles
  (0.02, 0.1, 0.2, 0.3, 0.5, 0.7, 0.8, 0.9, 0.98), on z-normalized outputs.
  MAE is on actual-scale outputs.
  \textbf{Bold} = best per section. $\downarrow$ lower is better.%
}
\label{tab:qloss}
\small\setlength{\tabcolsep}{12pt}
\begin{tabular}{lcc}
\toprule
\textbf{Model} & \textbf{QLoss}$\downarrow$ & \textbf{MAE}$\downarrow$ \\
\midrule
Transformer                               & 0.2637 & 0.802 \\
Transformer $+$ TDA$_E$                  & 0.2224 & 0.692 \\
Transformer $+$ TDA$_E$ $+$ TDA$_I$     & 0.1687 & \textbf{0.596} \\
Transformer $+$ Sheaf                    & \textbf{0.1675} & 0.629 \\
\bottomrule
\end{tabular}
\end{table}

\newpage
\section{Internal Corpus: Cold-Start and Seasonality MAE Tables}
\label{app:internal_coldstart}

Sections~\ref{sec:seasonal_spikes} and~\ref{sec:coldstart} describe the
seasonality and cold-start evaluations on the internal corpus.
The main text reports summary statistics and figure panels;
this appendix provides the complete per-week MAE tables.

  \begin{table}[h]
  \centering
  \caption{Seasonality spikes MAE over peak-demand window (all
  items). DLinear and NLinear from~\citet{zeng2023transformers}.}
  \label{tab:seasonality_mae}
  \begin{tabular}{lcccc}
  \toprule
  \textbf{Model} & \textbf{Week 0} & \textbf{Week 1} & \textbf{Week 2} & \textbf{Week 3} \\
  \midrule
  \multicolumn{5}{l}{\textit{Transformer variants}} \\
  Transformer                                      & 2.016 & 2.176 & 2.228 & 2.250 \\
  Transformer $+$ TDA$_E$                          & \textbf{1.725} & 1.929 & 2.023 &
   2.002 \\
  Transformer $+$ TDA$_E$ $+$ TDA$_I$              & 2.020 & 2.121 & 2.031 & 2.112 \\
  Transformer $+$ TDA$_E$ $+$ TDA$_I$ $+$ Sheaf   & 1.781 & \textbf{1.909} &
  \textbf{1.872} & \textbf{1.924} \\
  \midrule
  \multicolumn{5}{l}{\textit{Classical baselines}} \\
  Rate-based  & \textbf{1.985} & \textbf{2.191} & \textbf{2.387} & \textbf{2.874} \\
  DLinear     & 2.089 & 2.339 & 2.518 & 3.060 \\
  NLinear     & 1.942 & 2.136 & 2.293 & 2.757 \\
  XGBoost     & 2.272 & 2.519 & 2.743 & 3.368 \\
  \midrule
  \multicolumn{5}{l}{\textit{Zero-shot TSFMs}} \\
  Chronos     & \textbf{1.853} & \textbf{2.049} & \textbf{2.219} & \textbf{2.780} \\
  TimesFM     & 2.082 & 2.300 & 2.504 & 2.981 \\
  \bottomrule
  \end{tabular}
  \end{table}

\begin{table}[h]
  \centering
  \caption{Cold-start MAE vs.\ weeks of post-launch history (new
  items only).}
  \label{tab:cold_start_mae}
  \begin{tabular}{lcccc}
  \toprule
  \textbf{Model} & \textbf{Week 0} & \textbf{Week 1} & \textbf{Week 2} & \textbf{Week 3} \\
  \midrule
  \multicolumn{5}{l}{\textit{Transformer variants}} \\
  Transformer                                      & 1.887 & 1.690 & 1.565 & 1.535 \\
  Transformer $+$ TDA$_E$                          & 1.733 & 1.555 & 1.412 &
  \textbf{1.353} \\
  Transformer $+$ TDA$_E$ $+$ TDA$_I$              & \textbf{1.375} & 1.385 & 1.380 &
   1.458 \\
  Transformer $+$ TDA$_E$ $+$ TDA$_I$ $+$ Sheaf   & 1.395 & \textbf{1.388} &
  \textbf{1.385} & 1.524 \\
  \midrule
  \multicolumn{5}{l}{\textit{Classical baselines}} \\
  Rate-based  & 1.796 & 1.716 & \textbf{1.497} & \textbf{1.525} \\
  DLinear     & 1.788 & 1.716 & 1.646 & 1.583 \\
  NLinear     & \textbf{1.700} & \textbf{1.617} & 1.652 & 1.599 \\
  XGBoost     & 1.921 & 1.866 & 1.779 & 1.733 \\
  \midrule
  \multicolumn{5}{l}{\textit{Zero-shot TSFMs}} \\
  Chronos     & \textbf{1.557} & \textbf{1.557} & 1.550 & 1.538 \\
  TimesFM     & 1.946 & 1.672 & \textbf{1.500} & \textbf{1.519} \\
  \bottomrule
  \end{tabular}
  \end{table}

\newpage

\section{ECL: Full Results}
\label{app:ecl_full}

We benchmark on ECL (321 hourly electricity consumption series, UCI, 2012--2014), the canonical long-term forecasting benchmark used by Autoformer, PatchTST, and iTransformer.

\textbf{Protocol.}
All models use a 96-hour context window (4 days), matching the standard ECL evaluation context. Prediction horizons: $H\in\{96, 192, 336, 720\}$ hours.
Data is normalized (zero-mean, unit-variance per series) before training and
evaluation; this is the canonical ``normalized'' protocol, producing metrics
directly comparable to published LTSF results.
In Table~\ref{tab:ecl_full}, TDA alone provides no gain or mild regression at all horizons. The sheaf drives consistent improvements for both adapter families from H96 through H336, with gains attenuating at H720 as the static topological coordinate becomes less marginal over the backbone's long-range distributional prior.

\begin{table}[H]
\centering
\caption{ECL (321 electricity customers): MAE / MSE at $H\in\{96,192,336,720\}$ hours.
 All trained variants: 96-hour context window. Normalized protocol.
 \textbf{Bold} = best per section. $\downarrow$ lower is better.}
\label{tab:ecl_full}
\small\setlength{\tabcolsep}{4pt}
\resizebox{\linewidth}{!}{%
\begin{tabular}{l rr rr rr rr}
\toprule
& \multicolumn{2}{c}{\textbf{H96}}
& \multicolumn{2}{c}{\textbf{H192}}
& \multicolumn{2}{c}{\textbf{H336}}
& \multicolumn{2}{c}{\textbf{H720}} \\
\cmidrule(lr){2-3}\cmidrule(lr){4-5}\cmidrule(lr){6-7}\cmidrule(lr){8-9}
\textbf{Model}
 & \textbf{MAE}$\downarrow$ & \textbf{MSE}$\downarrow$
 & \textbf{MAE}$\downarrow$ & \textbf{MSE}$\downarrow$
 & \textbf{MAE}$\downarrow$ & \textbf{MSE}$\downarrow$
 & \textbf{MAE}$\downarrow$ & \textbf{MSE}$\downarrow$ \\
\midrule
\multicolumn{9}{l}{\textit{Literature: published LTSF benchmarks}} \\
Autoformer~\citep{wu2021autoformer}
 & 0.317 & 0.201 & 0.334 & 0.222 & 0.338 & 0.231 & 0.361 & 0.254 \\
PatchTST~\citep{nie2023patchtst}
 & 0.285 & 0.195 & 0.289 & 0.199 & 0.305 & 0.215 & 0.337 & 0.256 \\
iTransformer~\citep{liu2024itransformer}
 & \best{0.270} & \best{0.178} & \best{0.274} & \best{0.182} & \best{0.292} & \best{0.200} & \best{0.320} & \best{0.220} \\
\midrule
\multicolumn{9}{l}{\textit{Transformer variants}} \\
Transformer
 & \best{0.193} & \best{0.091} & 0.234 & 0.125 & \best{0.243} & 0.140 & 0.355 & 0.289 \\
Transformer $+$ TDA
 & 0.197 & 0.102 & \best{0.227} & \best{0.119} & \best{0.243} & 0.141 & \best{0.351} & \best{0.276} \\
Transformer $+$ TDA $+$ Sheaf
 & 0.196 & \best{0.091} & 0.231 & \best{0.119} & 0.245 & \best{0.136} & 0.355 & 0.279 \\
\midrule
\multicolumn{9}{l}{\textit{Chronos 2.0 variants}} \\
Chronos Zero-Shot & 0.586 & 0.610 & 0.594 & 0.616 & 0.612 & 0.638 & 0.642 & 0.688 \\
Chronos Vanilla Adapter
 & 0.302 & 0.205 & 0.305 & 0.207 & 0.319 & 0.221 & 0.348 & 0.259 \\
Chronos $+$ TDA
 & 0.302 & 0.205 & 0.305 & 0.207 & 0.318 & 0.220 & 0.349 & 0.259 \\
Chronos $+$ TDA $+$ Sheaf
 & \best{0.290} & \best{0.190} & \best{0.298} & \best{0.198} & \best{0.315} & \best{0.217} & \best{0.346} & \best{0.257} \\
\midrule
\multicolumn{9}{l}{\textit{TimesFM 2.5 variants}} \\
TimesFM Zero-Shot
 & 0.580 & 0.602 & 0.581 & 0.599 & 0.589 & 0.605 & 0.607 & 0.626 \\
TimesFM Vanilla Adapter
 & 0.300 & 0.204 & 0.303 & 0.206 & 0.315 & 0.219 & \best{0.342} & \best{0.252} \\
TimesFM $+$ TDA
 & 0.300 & 0.204 & 0.303 & 0.206 & 0.315 & 0.219 & 0.343 & 0.254 \\
TimesFM $+$ TDA $+$ Sheaf
 & \best{0.289} & \best{0.190} & \best{0.296} & \best{0.197} & \best{0.313} & \best{0.217} & 0.350 & 0.269 \\
\bottomrule
\end{tabular}}
\end{table}

\newpage


\begin{thebibliography}{99}
\bibitem[Ansari et al.(2025)]{ansari2025chronos2}
A.~F.~Ansari et al.
\newblock {Chronos-2: From Univariate to Universal Forecasting}.
\newblock \emph{arXiv preprint arXiv:2510.15821}, 2025.
\bibitem[Bodnar et al.(2022)]{bodnar2022neural}
C.~Bodnar, F.~Di~Giovanni, B.~Chamberlain, P.~Li\'{o}, and M.~Bronstein.
\newblock {Neural Sheaf Diffusion}.
\newblock In \emph{NeurIPS}, 2022.
\bibitem[Bubenik(2015)]{bubenik2015statistical}
P.~Bubenik.
\newblock {Statistical Topological Data Analysis Using Persistence Landscapes}.
\newblock \emph{Journal of Machine Learning Research}, 16(1):77-102, 2015.
\bibitem[Carlsson(2009)]{carlsson2009topology}
G.~Carlsson.
\newblock {Topology and Data}.
\newblock \emph{Bulletin of the American Mathematical Society}, 46(2):255-308,
  2009.
\bibitem[Tralie et al.(2018)]{ctralie2018ripser}
C.~Tralie, N.~Saul, and R.~Bar-On.
\newblock {Ripser.py: A Lean Persistent Homology Library for Python}.
\newblock \emph{Journal of Open Source Software}, 3(29):925, 2018.
\bibitem[Curry(2014)]{curry2014sheaves}
J.~Curry.
\newblock \emph{Sheaves, Cosheaves and Applications}.
\newblock PhD thesis, University of Pennsylvania, 2014.
\bibitem[Das et al.(2024)]{das2024timesfm}
A.~Das et al.
\newblock {TimesFM}: A decoder-only foundation model for time-series
  forecasting.
\newblock In \emph{ICML}, 2024.
\bibitem[Edelsbrunner and Harer(2010)]{edelsbrunner2010computational}
H.~Edelsbrunner and J.~Harer.
\newblock \emph{Computational Topology: An Introduction}.
\newblock American Mathematical Society, 2010.
\bibitem[Hansen and Ghrist(2021)]{hansen2021sheaf}
J.~Hansen and R.~Ghrist.
\newblock Opinion dynamics on discourse sheaves.
\newblock \emph{SIAM Journal on Applied Mathematics}, 81(5):2033-2060, 2021.
\bibitem[Li et al.(2018)]{li2018dcrnn}
Y.~Li et al.
\newblock {Diffusion Convolutional Recurrent Neural Network}.
\newblock In \emph{ICLR}, 2018.
\bibitem[Lim et al.(2021)]{lim2021tft}
B.~Lim, S.~\"{O}.~Ar\i{}k, N.~Loeff, and T.~Pfister.
\newblock {Temporal Fusion Transformers}.
\newblock \emph{International Journal of Forecasting}, 37(4):1748-1764, 2021.
\bibitem[Lin et al.(2025a)]{lin2025crosstoponet}
Z.~Lin, N.~F.~S.~Zulkepli, M.~S.~M.~Kasihmuddin, and R.~U.~Gobithaasan.
\newblock {CrossTopoNet: A Cross-Attention Framework on Topological Latent
  Feature Space for Time-Series Forecasting}.
\newblock \emph{Knowledge-Based Systems}, 2025.
\bibitem[Lin et al.(2025b)]{lin2025tis}
Z.~Lin and N.~F.~S.~Zulkepli.
\newblock {Time-Series Forecasting via Topological Information Supervised
  Framework with Efficient Topological Feature Learning}.
\newblock \emph{arXiv preprint arXiv:2503.23757v1}, 2025.
\newblock \textit{Withdrawn by authors; cited for methodological comparison.}
\bibitem[Mostafa et al.(2026)]{mostafa2026stsheaf}
A.~Mostafa, R.~Younis, and Z.~Ahmadi.
\newblock {Dynamic Sheaf Diffusion Networks with Adaptive Local Structure for
  Heterogeneous Spatio-Temporal Graph Learning}.
\newblock \emph{arXiv preprint arXiv:2604.11275v1}, 2026.
\bibitem[Nie et al.(2023)]{nie2023patchtst}
Y.~Nie et al.
\newblock {A Time Series is Worth 64 Words}.
\newblock In \emph{ICLR}, 2023.
\bibitem[Papillon et al.(2024)]{papillon2024tdl}
M.~Papillon, S.~Sanborn, M.~Hajij, et al.
\newblock {Position: Topological Deep Learning is the New Frontier for
  Relational Learning}.
\newblock In \emph{ICML}, 2024.
\bibitem[Kim et al.(2025)]{kim2025topocl}
N.~Kim, H.~Baik, and Y.~Yoon.
\newblock {TopoCL: Topological Contrastive Learning for Time Series}.
\newblock \emph{arXiv preprint arXiv:2502.02924}, 2025.
\bibitem[Wang et al.(2019)]{wang2019deepfactors}
Y.~Wang, A.~Smola, D.~Maddix, J.~Gasthaus, D.~Foster, and T.~Januschowski.
\newblock {Deep Factors for Forecasting}.
\newblock In \emph{ICML}, 2019.
\bibitem[Chen and Guestrin(2016)]{chen2016xgboost}
T.~Chen and C.~Guestrin.
\newblock {XGBoost: A Scalable Tree Boosting System}.
\newblock In \emph{KDD}, 2016.
\bibitem[Wu et al.(2019)]{wu2019graphwavenet}
Z.~Wu et al.
\newblock {Graph WaveNet for Deep Spatial-Temporal Graph Modeling}.
\newblock In \emph{IJCAI}, 2019.
\bibitem[Wu et al.(2020)]{wu2020mtgnn}
Z.~Wu et al.
\newblock {Connecting the Dots: Multivariate Time Series Forecasting}.
\newblock In \emph{KDD}, 2020.
\bibitem[Zeng et al.(2023)]{zeng2023transformers}
A.~Zeng, M.~Chen, L.~Zhang, and Q.~Xu.
\newblock {Are Transformers Effective for Time Series Forecasting?}
\newblock In \emph{AAAI}, 2023.
\bibitem[Zeng et al.(2021)]{zeng2021topological}
S.~Zeng, F.~Graf, C.~Hofer, and R.~Kwitt.
\newblock {Topological Attention for Time Series Forecasting}.
\newblock In \emph{NeurIPS}, volume~34, 2021.
\bibitem[Wu et al.(2021)]{wu2021autoformer}
H.~Wu, J.~Xu, J.~Wang, and M.~Long.
\newblock {Autoformer: Decomposition Transformers with Auto-Correlation for
  Long-Term Series Forecasting}.
\newblock In \emph{NeurIPS}, volume~34, 2021.
\bibitem[Liu et al.(2024)]{liu2024itransformer}
Y.~Liu et al.
\newblock {iTransformer: Inverted Transformers Are Effective for Time Series
  Forecasting}.
\newblock In \emph{ICLR}, 2024.
\bibitem[Zhou et al.(2021)]{zhou2021informer}
H.~Zhou et al.
\newblock {Informer: Beyond Efficient Transformer for Long Sequence
  Forecasting}.
\newblock In \emph{AAAI}, 2021.

\end{thebibliography}
\end{document}